\newtheorem{theorem}{Theorem}
\newlength\myindent
\newcommand\bindent{%
  \begingroup
  \setlength{\itemindent}{\myindent}
  \addtolength{\algorithmicindent}{\myindent}
}
\newcommand\eindent{\endgroup}
\title{Accelerating Visual-Policy Learning through Parallel Differentiable Simulation}
\author{%
  Haoxiang You \\
  Department of Mechanical Engineering\\
  Yale University\\
  New Haven, CT 06520 \\
  \texttt{haoxiang.you@yale.edu} \\
  \And
  Yilang Liu \\
  Department of Mechanical Engineering \\
  Yale University \\
  New Haven, CT 06520 \\
  \texttt{yilang.liu@yale.edu} \\
  \AND
  Ian Abraham \\
  Department of Mechanical Engineering \\
  Department of Computer Science\\
  Yale University \\
  New Haven, CT 06520 \\
  \texttt{ian.abraham@yale.edu} \\
}
\begin{document}

\maketitle

\begin{abstract} 
In this work, we propose a computationally efficient algorithm for visual policy learning that leverages differentiable simulation and first-order analytical policy gradients.
Our approach decouple the rendering process from the computation graph, enabling seamless integration with existing differentiable simulation ecosystems without the need for specialized differentiable rendering software.
This decoupling not only reduces computational and memory overhead but also effectively attenuates the policy gradient norm, leading to more stable and smoother optimization.  
We evaluate our method on standard visual control benchmarks using modern GPU-accelerated simulation. 
Experiments show that our approach significantly reduces wall-clock training time and consistently outperforms all baseline methods in terms of final returns.
Notably, on complex tasks such as humanoid locomotion, our method achieves a $4\times$ improvement in final return, and successfully learns a humanoid running policy within 4 hours on a single GPU.
Videos and code are available on \url{https://haoxiangyou.github.io/Dva_website/}
\end{abstract}
\section{Introduction}\label{sec: intro}
Learning to control robots from visual inputs is a key challenge in robotics, with the potential to enable a wide range of real-world applications, ranging from autonomous driving and home service robots to industrial automation.
Most methods for learning visual policies fall into two categories: imitation learning and reinforcement learning (RL).
Imitation learning trains policies by mimicking expert demonstrations, which are typically collected via human operation~\citep{bojarski2016selfdriving,kendall2019learning} or teleoperation systems~\citep{chi2024diffusionpolicy, black2024pi0visionlanguageactionflowmodel}.
When expert demonstrations are scarce or difficult to obtain, visual policies can instead be learned through RL~\citep{mnih2015human, hafner2019learning_latent_dynamics}.
However, RL methods typically require long training times and substantial computational resources, such as large-scale GPU clusters, to achieve effective control.

Recent advances in differentiable simulation have enabled alternative policy optimization methods, known as analytical policy gradients (APG)~\citep{brax2021github,xu2021shac,schwarke2024learning}.
These methods achieve much higher computational efficiency by replacing zeroth-order gradient estimates with first-order gradients.
However, extending APG methods to visual control remains challenging: obtaining high-quality differentiable rendering is non-trivial, and computing Jacobians over pixel inputs is both memory- and computationally inefficient.
Consequently, existing approaches either train separate differentiable renderers~\citep{wiedemannwueest2023apg,liu2024differentiablerobotrendering} or rely on low-dimensional visual features, which demand considerable engineering effort~\citep{heeg2024quadrotot_vis_apg,luo2024residualpolicylearningperceptive}.

In this work, we propose Decoupled Visual-Based Analytical Policy Gradient (D.Va), a novel method for learning visual policies using differentiable simulation.
The core idea is to decouple visual observations from the computation graph, eliminating the need to differentiate through the rendering process.
We find that this decoupling not only improves memory and computational efficiency by avoiding Jacobian computations over pixel space, but also normalizes the policy gradient, making visual policy learning more stable.
We further provide a formal analysis of our new computation graph, demonstrating that the proposed decoupling policy gradient can be interpreted as a form of policy distillation from open-loop trajectory optimization. 
This reveals a fundamental connection between open-loop trajectory optimization and closed-loop policy learning.

Finally, we benchmark a diverse set of visual policy learning methods using a GPU-accelerated simulation platform that supports parallelized physics and rendering, providing a robust and scalable testbed for evaluation.
Our comparisons include the proposed D.Va, two model-free RL algorithms~\citep{laskin_srinivas2020curl, yarats2021drqv2}, the model-based RL method DreamerV3~\citep{hafner2023dreamerv3}, an analytical policy gradient method with differentiable rendering, and state-to-visual distillation~\citep{mu2025state2vis_dagger}—a two-stage framework that first trains a state-based policy and then distills it into a visual policy through imitation learning.
Experiments highlight D.Va's superior computational efficiency across a wide range of control tasks.

In summary, our contributions are:
(a) Proposing D.Va, a computationally efficient method for visual policy learning;
(b) Benchmarking diverse visual policy learning approaches on state-of-the-art simulation platforms;
(c) Providing an analysis of analytical policy gradients and highlighting the new opportunities for integrating policy learning with trajectory optimization techniques.

\begin{figure}[tb]
    \vspace{-8pt}
    \begin{center}    \includegraphics[width=\textwidth]{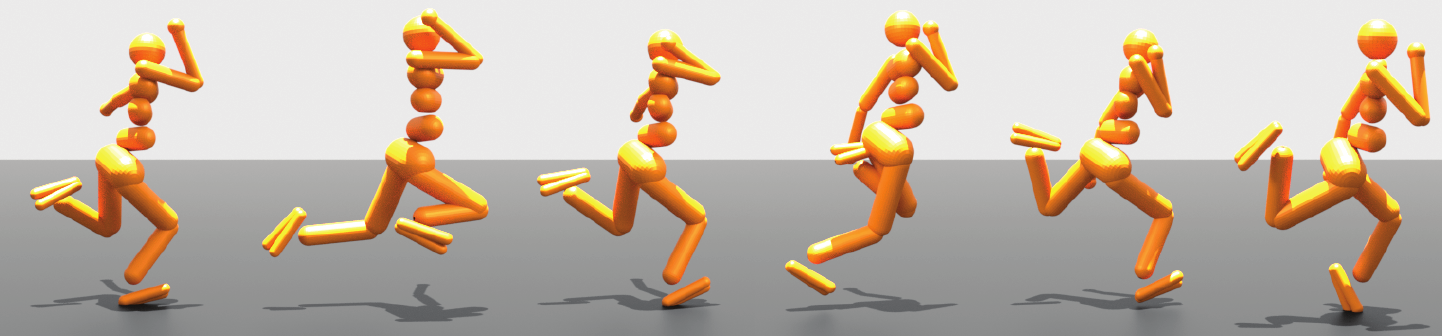}
    \end{center}
    \vspace{-10pt}
\caption{\textbf{Fast training of humanoid running policy from pixel input.} Our method learns a stable running gait in 4 hours on a single RTX 4080 GPU.}
    \vspace{-13pt}
    \label{fig: humanoid gait}
\end{figure}
\section{Background} \label{sec: background}
In this section, we formally define the policy optimization problem and introduce key concepts in analytical policy gradient methods.

\subsection{Problem formulation}
We consider the dynamical system $\mathbf{s}_{t+1} = f(\mathbf{s}_t, \mathbf{a}_t)$, where $\mathbf{s}_t \in \mathcal{S}$ denotes the state and $\mathbf{a}_t \in \mathcal{A}$ the action. The dynamics function $f: \mathcal{S} \times \mathcal{A} \rightarrow \mathcal{S}$ is assumed to be fully differentiable with respect to both state and action.
Let $\mathbf{o}_t = g(\mathbf{s}_t) \in \mathcal{O}$ denote the observation, where $g: \mathcal{S} \rightarrow \mathcal{O}$ is the sensor model. 
Throughout this paper, we assume $\mathbf{s}_t$ is a low-dimensional internal representation of the robot (e.g., joint states), while $\mathbf{o}_t$ represents observation derived from $\mathbf{s}_t$ (e.g., images). 

Consider the trajectory $\tau = \{\mathbf{s}_0, \mathbf{a}_0, \mathbf{s}_1, \mathbf{a}_1, \dots \mathbf{s}_T,\mathbf{a}_T\}$ , which is a sequence of state-action pairs with horizon $T$.
The total return is defined as $\mathcal{J} (\tau) = \sum_{i=0}^T \gamma^t R(\mathbf{s}_t, \mathbf{a}_t)$, where $\gamma \in (0,1)$ is the discount factor, and $R: \mathcal{S} \times \mathcal{A} \rightarrow \mathbb{R}$ is the reward function. 
We denotes the discounted temporal reward as $r_t = \gamma^t R(\mathbf{s}_t, \mathbf{a}_t)$.
A feedback policy $\pi(\cdot|\mathbf{o}_t,\boldsymbol{\theta}): \mathcal{O} \times \boldsymbol\Theta \rightarrow \Delta(\mathcal{A})$ is a family of conditional probability distributions that maps an observation to a probability distribution over actions. 
Typically, this distribution is modeled as a Gaussian, allowing the action to be expressed via the reparameterization trick: $\mathbf{a}_t = \boldsymbol{\mu}(\mathbf{o}_t, \boldsymbol{\theta}) + \boldsymbol{\sigma}(\mathbf{o}_t, \boldsymbol{\theta}) \odot \boldsymbol{\epsilon}_t$ where $\boldsymbol{\mu}: \mathcal{O} \times \Theta \rightarrow \mathcal{A}$, $\boldsymbol{\sigma}: \mathcal{O} \times \Theta \rightarrow \mathcal{A}$ represent the mean and standard deviation respectively, and $ \boldsymbol{\epsilon}_t \sim \mathcal{N}(\mathbf{0}, \mathbf{I})$ is injected noise. 
Given an initial condition $\mathbf{s}_0$, and a sequence of injected noises $\mathcal{E} = \{\boldsymbol{\epsilon}_0, \boldsymbol{\epsilon}_1, \dots \boldsymbol{\epsilon}_T\}$, a trajectory $\tau$ can be generated by rolling out from policy under dynamics $f$ and sensor model $g$, which we explicitly denoted as $\tau(\mathbf{s}_0, \boldsymbol{\theta}, \mathcal{E})$ and the corresponding return as 
$\mathcal{J}\big(\tau(\mathbf{s}_0, \boldsymbol{\theta}, \mathcal{E})\big)$.
For notational simplicity, we omit the explicit trajectory $\tau$ and write the return directly as $\mathcal{J}(\mathbf{s}_0, \boldsymbol{\theta}, \mathcal{E})$.  
The expected return for a given policy is defined as $\mathcal{V}(\boldsymbol{\theta}) = \mathbb{E}_{\mathbf{s}_0 \sim \rho_0}\mathbb{E}_{\boldsymbol\epsilon_t \overset{\text{i.i.d.}}{\sim}  \mathcal{N}(\mathbf{0}, \mathbf{I})} \mathcal{J}(\mathbf{s}_0, \boldsymbol{\theta}, \mathcal{E})$, where $\rho_0$ is initial distribution.  The goal of policy optimization is to find policy parameters $\boldsymbol \theta$ maximizing the expected return. 

\subsection{Analytical policy gradient}

Here, we provide background on analytical policy gradient (APG) methods. These methods compute the policy gradient as
\begin{equation}
    \nabla_{\boldsymbol\theta} \mathcal{V} = \mathbb{E}_{\mathbf{s}_0 \sim \rho_0}\mathbb{E}_{\boldsymbol\epsilon_t \overset{\text{i.i.d.}}{\sim}  \mathcal{N}(\mathbf{0}, \mathbf{I})} \nabla_{\boldsymbol{\theta}} \mathcal{J}(\mathbf{s}_0, \boldsymbol{\theta}, \mathcal{E})= \mathbb{E}_{\mathbf{s}_0 \sim \rho_0}\mathbb{E}_{\boldsymbol\epsilon_t \overset{\text{i.i.d.}}{\sim}  \mathcal{N}(\mathbf{0}, \mathbf{I})}\Big[\sum_{t=0}^T \nabla_{\boldsymbol{\theta}} r_t \Big] \label{eq: policy gradient},
\end{equation}
where the gradient of each term in the sum is given by
\begin{align}
    \nabla_\theta r_t &= \frac{\partial r_t}{\partial \textbf{a}_t} \frac{d \textbf{a}_t}{d \boldsymbol{\theta}} + \frac{\partial r_t}{\partial \textbf{s}_t} \frac{d \textbf{s}_t}{d \boldsymbol{\theta}} , \ t=0,\dots, T \nonumber \\
    \frac{d \textbf{a}_t}{d \boldsymbol{\theta}} &= \frac{\partial \textbf{a}_t}{\partial \boldsymbol{\theta}} +\frac{\partial \textbf{a}_t}{\partial \textbf{o}_t} \frac{d \textbf{o}_t}{d \textbf{s}_t} \frac{d \textbf{s}_t}{d \boldsymbol{\theta}}, \ t=0,\dots, T \nonumber\\
    \frac{d \textbf{s}_t}{d \boldsymbol{\theta}} &= \frac{\partial \mathbf{s}_t}{\partial \mathbf{s}_{t-1}} \frac{d \mathbf{s}_{t-1}}{d \boldsymbol{\theta}} + \frac{\partial \mathbf{s}_t}{\partial \textbf{a}_{t-1}}\frac{d\textbf{a}_{t-1}}{d\boldsymbol{\theta}}, \ t=1\dots T \ \text{and} \ \frac{d \mathbf{s}_0}{d\boldsymbol{\theta}} = \textbf{0}. \label{eq: rewards gradient}
\end{align}
Here, $\frac{d}{d}$ denotes the total derivative,  and $\frac{\partial}{\partial}$  represents the partial derivative, both expressed in matrix form as Jacobians. The expectation can then be estimated via empirical sum. 
As these methods estimate the first-order policy gradient by backpropagation through trajectories, they are also named as first-order policy gradients (FoPG) or backpropagation through time (BPTT). 
\paragraph{Short-horizon actor critic~(SHAC)}
The trajectory gradient, i.e., $\nabla_{\boldsymbol{\theta}}\mathcal{J}(\mathbf{s}_0, \boldsymbol{\theta}, \mathcal{E})$, can quickly become intractable as the horizon $T$ increases due to exploding gradients by multiplying a series of matrices.
The exploding gradient of an individual trajectory leads to the high-variance empirical estimate of the policy gradient~\citep{metz2021gradients}, as well as an empirical bias problem~\citep{suh2022differentiable}. 
Fortunately, these problems are largely solved by the Short-Horizon Actor-Critic (SHAC) method~\citep{xu2021shac}. 
The key idea is to truncate the long trajectory into smaller segments and incorporate a learned value function for long-horizon predictions.
More specifically, at each iteration, the SHAC algorithm optimizes the following actor loss
\begin{equation}
    \mathcal{L}_\theta = -\frac{1}{Nh}\sum_{i=1}^N \Big[\big(\sum_{t=t_0}^{t_0 + h -1} \gamma^{t-t_0} R(\mathbf{s}_t^{(i)}, \mathbf{a}_t^{(i)})\big) + \gamma^h V_\phi(\mathbf{s}_{t_0+h}^{(i)})\Big], \label{eq: shac actor loss}
\end{equation}
where $\mathbf{s}^{(i)}_t$ and $\mathbf{a}^{(i)}_t$ are states and actions of the $i$-th trajectory rollout, and $V_\phi: \mathcal{S} \rightarrow \mathbb{R}$ is the value function learned with TD-$\lambda$ tricks~\citep{sutton1998introduction}.

\section{Method}\label{sec: method}
\begin{figure}[tb]
    \vspace{-10pt}
    \begin{center}
    \includegraphics[width=\textwidth]{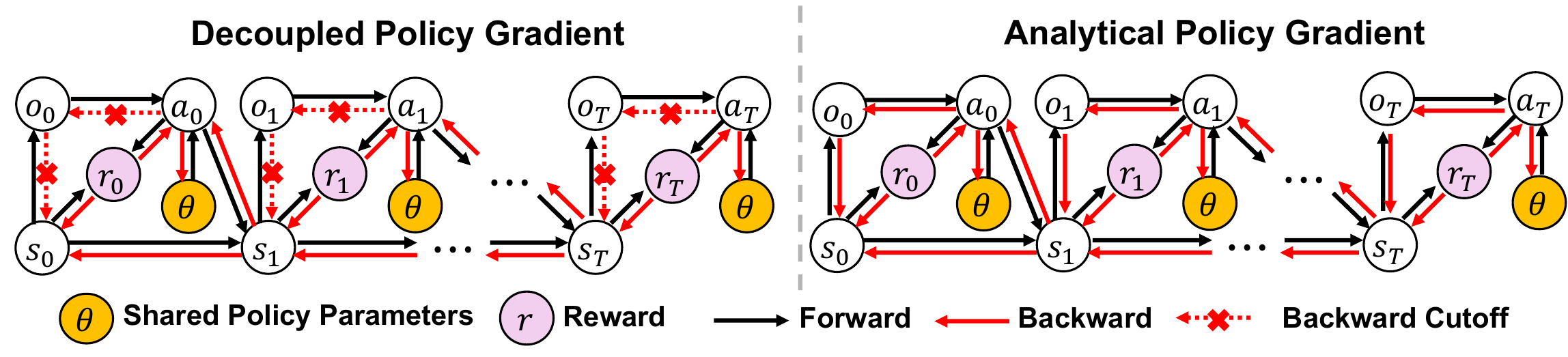}
    \end{center}
    \vspace{-10pt}
    \caption{\small{\textbf{Computation graphs of decoupled policy gradient~(DPG) and APG.} The policy gradient is traced from any reward $r_t$, propagated backward through the graph to the shared parameters $\boldsymbol{\theta}$. APG backpropagates through the entire pipeline, whereas the DPG prevents gradient flow through the rendering process.}}
    \vspace{-15pt}
    \label{fig: computation graph diagram}
\end{figure}
In this work, we extend SHAC to handle complex observations, e.g., images, whereas the original SHAC primarily focused on low-dimensional state spaces.
The main challenge in visual setting is calculating the observation Jacobian $\frac{d \mathbf{o}_t}{d \mathbf{s}_t}$, which is both computationally and memory intensive due to the high dimensionality of images. 
To address this, we omit all terms of the form $\big(\frac{\partial \mathbf{a}_t}{\partial \mathbf{o}_t} \frac{d \mathbf{o}_t}{d \mathbf{s}_t} \frac{d \mathbf{s}_t}{d \boldsymbol{\theta}}\big)$, resulting in a quasi-policy gradient that we refer to as the decoupled policy gradient (DPG).
In this section, we provide a formal analysis showing how the DPG works, as well as 
algorithm that utilizes DPG for training a visual policy.
\subsection{Decoupled policy gradient}
We divide the analytical policy gradient~\eqref{eq: policy gradient} into two parts by separating all terms involving $\frac{d \mathbf{o}_t}{d \mathbf{s}_t}$ from those that do not:
$\nabla_{\boldsymbol\theta} \mathcal{V} =  \tilde{\nabla}_{\boldsymbol\theta} \mathcal{V} + \mathcal{B}$, where
\begin{align}
    \tilde{\nabla}_{\boldsymbol\theta} \mathcal{V} &=  \mathbb{E}_{\mathbf{s}_0 \sim \rho_0}\mathbb{E}_{\boldsymbol\epsilon_t \overset{\text{i.i.d.}}{\sim}  \mathcal{N}(\mathbf{0}, \mathbf{I})} \Big[ \tilde{\nabla}_{\boldsymbol{\theta}} \mathcal{J}(\mathbf{s}_0, \boldsymbol{\theta}, \mathcal{E})  \Big] = \mathbb{E}_{\mathbf{s}_0 \sim \rho_0}\mathbb{E}_{\boldsymbol\epsilon_t \overset{\text{i.i.d.}}{\sim}  \mathcal{N}(\mathbf{0}, \mathbf{I})}\Big[\sum_{t=0}^T \tilde{\nabla}_{\boldsymbol{\theta}} r_t\Big]
    \nonumber \\
    \tilde{\nabla}_\theta r_t &= \frac{\partial r_t}{\partial \textbf{a}_t} \frac{\partial \textbf{a}_t}{\partial \boldsymbol{\theta}} + \frac{\partial r_t}{\partial \textbf{s}_t} \tilde{\frac{d \textbf{s}_t}{d \boldsymbol{\theta}}}, \, t=0\dots T\nonumber\\ 
    \tilde{\frac{d \textbf{s}_t}{d \boldsymbol{\theta}}} &= \frac{\partial \mathbf{s}_t}{\partial \mathbf{s}_{t-1}} \tilde{\frac{d \mathbf{s}_{t-1}}{d \boldsymbol{\theta}}} + \frac{\partial \mathbf{s}_t}{\partial \textbf{a}_{t-1}}\frac{\partial \textbf{a}_{t-1}}{\partial \boldsymbol{\theta}}, \ t=1\dots T \ \text{and} \ \tilde{\frac{d \mathbf{s}_0}{d\boldsymbol{\theta}}} = \textbf{0} \label{eq: decoupled policy gradient},
\end{align}
and
\begin{align}
    \mathcal{B} &= \mathbb{E}_{\mathbf{s}_0 \sim \rho_0}\mathbb{E}_{\boldsymbol\epsilon_t \overset{\text{i.i.d.}}{\sim}  \mathcal{N}(\mathbf{0}, \mathbf{I})} \Big[\sum_{t=0}^T \big(\frac{\partial r_t}{\partial \mathbf{a}_t} \frac{\partial \mathbf{a}_t}{\partial \mathbf{o}_t} \frac{d \mathbf{o}_t}{d \mathbf{s}_t} \frac{d \mathbf{s}_t}{d \boldsymbol{\theta}}\big)\Big]\nonumber\\
    \frac{d \mathbf{s}_t}{d \boldsymbol{\theta}} &= \frac{\partial \mathbf{s}_t}{\partial \mathbf{a}_{t-1}}\big(\frac{\partial \mathbf{a}_{t-1}}{\partial \boldsymbol{\theta}} + \frac{\partial \mathbf{a}_{t-1}}{\partial \mathbf{o}_{t-1}} \frac{d \mathbf{o_{t-1}}}{d \mathbf{s}_{t-1}} \frac{d \mathbf{s}_{t-1}}{d \boldsymbol{\theta}} \big) + \frac{\partial \mathbf{s}_t}{\partial \mathbf{s}_{t-1}} \frac{d \mathbf{s}_{t-1}}{d \boldsymbol{\theta}} , \ t=1\dots T \ \text{and} \ \frac{d \mathbf{s}_0}{d\boldsymbol{\theta}} = \textbf{0} \label{eq: control regularization}.
\end{align}

Detailed derivation of this decomposition is provided in Appendix~\ref{sec: additional derivation}.
We refer to $\tilde{\nabla}_{\boldsymbol\theta} \mathcal{V}$~\eqref{eq: decoupled policy gradient} as the \emph{decoupled policy gradient}, which improves the policy by distilling results from open-loop trajectory optimizations. 
We denote $\mathcal{B}$~\eqref{eq: control regularization} as \emph{control regularization}
which captures the interdependence between actions.
Below, we provide a conceptual explanation of each term, describe the rationale behind their naming, and validate the effectiveness of the decoupled policy gradient through experiments.

\paragraph{Distilling from open-loop Trajectories}
Here, we show how the decoupled policy gradient~\eqref{eq: decoupled policy gradient} can be interpreted as distilling from open-loop trajectories.
We begin by initializing a open-loop sequence of controls, $\mathbf{A} = \{\mathbf{a}_0, \mathbf{a}_1, \dots, \mathbf{a}_T\}$, by rolling out the policy $\pi(\cdot | \cdot; \boldsymbol{\theta})$ under the initial condition $\mathbf{s}_0$. 
Given the open-loop control sequence $\mathbf{A}$ and initial condition $\mathbf{s}_0$, the sequence of states $\mathbf{S} = \{\mathbf{s}_0, \mathbf{s}_1, \dots, \mathbf{s}_T \}$ and observation $\mathbf{O} = \{\mathbf{o}_0, \mathbf{o}_1, \dots, \mathbf{o}_T\}$ can be reconstructed via dynamics $f$ and sensor model $g$. 
In this case, the return $\mathcal{J}(\mathbf{s}_0, \mathbf{A}) = \sum_{t=0}^T r_t$ is solely a function of the initial condition $\mathbf{s}_0$ and the control sequence $\mathbf{A}$. 
The gradient of return with respect to the control sequence $\mathbf{A}$ is given by
\begin{align}
    \nabla_\mathbf{A} \mathcal{J} &= \{\nabla_{\mathbf{a}_0} \mathcal{J}, \nabla_{\mathbf{a}_1} \mathcal{J}, \dots, \nabla_{\mathbf{a}_T} \mathcal{J}\},     \text{where} \ \nabla_{\mathbf{a}_t} \mathcal{J}  = \sum_{j=t}^T \nabla_{\mathbf{a}_t} r_j,  \, \text{and}\nonumber\\
    \nabla_{\mathbf{a}_t} r_j &=\begin{cases}
    \frac{\partial r_j}{\partial \mathbf{s}_j}  \frac{\partial \mathbf{s}_j}{\partial \mathbf{s}_{j-1}} \frac{\partial \mathbf{s}_{j-1}}{\partial \mathbf{s}_{j-2}}\cdots \frac{\partial \mathbf{s}_{t+2}}{\partial \mathbf{s}_{t+1}} \frac{\partial \mathbf{s}_{t+1}}{\partial \mathbf{a}_t}, \text{when} \,j > t\\ 
    \frac{\partial r_t}{\partial \mathbf{a}_t}, \text{when} \, j=t
    \end{cases}, \, t=0, \dots T, \ j=t, \dots T. \label{eq: return gradient respect to controls}
\end{align}
We improve the control sequence by taking a small step $\beta$ in the gradient direction for each action
\begin{equation}
    \bar{\mathbf{a}}_t = \mathbf{a}_t + \beta \nabla_{\mathbf{a}_t} \mathcal{J}. \label{eq: control updates}
\end{equation}
We denote the updated sequence as $\bar{\mathbf{A}} := \{\bar{\mathbf{a}}_0, \bar{\mathbf{a}}_1, \dots, \bar{\mathbf{a}}_T\}$.
The behavior cloning loss is then defined as the discrepancy between the actions generated by the current policy and those in the updated control sequence:
\begin{equation}
    \mathcal{L}_\text{BC}(\boldsymbol{\theta}, \mathbf{O}, \bar{\mathbf{A}}, \mathcal{E}):= \frac{1}{2\beta} \sum_{t=0}^T \|\boldsymbol{\mu}(\mathbf{o}_t; \boldsymbol{\theta}) + \boldsymbol{\sigma}(\mathbf{o}_t; \boldsymbol{\theta}) \odot \boldsymbol{\epsilon}_t  - \bar{\mathbf{a}}_t \|_2^2, \label{eq: behavior cloning loss}
\end{equation}
where $\mathcal{E} = \{\boldsymbol{\epsilon}_0, \boldsymbol{\epsilon}_1, \dots \boldsymbol{\epsilon}_T\}$ is same injected noises use to initialize open-loop sequence $\mathbf{A}$.

\begin{theorem} \label{theorem: policy distillation}
The decoupled trajectory gradient in~\eqref{eq: decoupled policy gradient} equals the negative gradient of the behavior cloning loss in Equation~\eqref{eq: behavior cloning loss}, i.e., $\tilde{\nabla}_{\boldsymbol{\theta}} \mathcal{J}(\mathbf{s}_0, \boldsymbol{\theta}, \mathcal{E}) = -\nabla_{\boldsymbol{\theta}} \mathcal{L}_\text{BC}(\boldsymbol{\theta}, \mathbf{O}, \bar{\mathbf{A}}, \mathcal{E})$.
\end{theorem}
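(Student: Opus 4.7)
The plan is to show both sides equal the same sum $\sum_{k=0}^T \nabla_{\mathbf{a}_k}\mathcal{J}\,\frac{\partial \mathbf{a}_k}{\partial \boldsymbol{\theta}}$, where $\nabla_{\mathbf{a}_k}\mathcal{J}$ is the open-loop control gradient defined in \eqref{eq: return gradient respect to controls} and $\frac{\partial \mathbf{a}_k}{\partial \boldsymbol{\theta}}$ denotes the explicit partial derivative of $\boldsymbol{\mu}(\mathbf{o}_k;\boldsymbol{\theta}) + \boldsymbol{\sigma}(\mathbf{o}_k;\boldsymbol{\theta}) \odot \boldsymbol{\epsilon}_k$ in its $\boldsymbol{\theta}$ slot, with $\mathbf{o}_k$ held fixed.

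For the right-hand side, I would first note that in $\mathcal{L}_\text{BC}$ the target $\bar{\mathbf{A}}$ is a constant (being defined via $\mathbf{a}_k$ computed from the rollout and then detached), and the observations $\mathbf{O}$ are the fixed rollout observations used to initialize $\mathbf{A}$. Differentiating \eqref{eq: behavior cloning loss} gives $\nabla_{\boldsymbol{\theta}}\mathcal{L}_\text{BC} = \tfrac{1}{\beta}\sum_{t=0}^T \tfrac{\partial \mathbf{a}_t}{\partial \boldsymbol{\theta}}^\top\bigl(\mathbf{a}_t(\boldsymbol{\theta}) - \bar{\mathbf{a}}_t\bigr)$. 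Since $\mathbf{a}_t(\boldsymbol{\theta})$ at the current parameters equals the rollout action $\mathbf{a}_t$, by \eqref{eq: control updates} we have $\mathbf{a}_t(\boldsymbol{\theta}) - \bar{\mathbf{a}}_t = -\beta\,\nabla_{\mathbf{a}_t}\mathcal{J}$. The $\beta$'s cancel, yielding $-\nabla_{\boldsymbol{\theta}}\mathcal{L}_\text{BC} = \sum_{t=0}^T \nabla_{\mathbf{a}_t}\mathcal{J}\,\tfrac{\partial \mathbf{a}_t}{\partial \boldsymbol{\theta}}$.

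For the left-hand side, I would unroll the recursion \eqref{eq: decoupled policy gradient} for $\widetilde{\tfrac{d\mathbf{s}_t}{d\boldsymbol{\theta}}}$. Writing the state-transition chain $\Phi_{t,k+1} := \tfrac{\partial \mathbf{s}_t}{\partial \mathbf{s}_{t-1}}\cdots \tfrac{\partial \mathbf{s}_{k+2}}{\partial \mathbf{s}_{k+1}}$ (with $\Phi_{k+1,k+1} = I$), induction on $t$ gives $\widetilde{\tfrac{d\mathbf{s}_t}{d\boldsymbol{\theta}}} = \sum_{k=0}^{t-1}\Phi_{t,k+1}\,\tfrac{\partial \mathbf{s}_{k+1}}{\partial \mathbf{a}_k}\,\tfrac{\partial \mathbf{a}_k}{\partial \boldsymbol{\theta}}$ because the decoupled recursion only propagates $\partial \mathbf{a}/\partial \boldsymbol{\theta}$ (never a total derivative through observations). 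Substituting into $\widetilde{\nabla}_{\boldsymbol{\theta}} r_t$ and then swapping the order of summation to collect all terms multiplying each $\tfrac{\partial \mathbf{a}_k}{\partial \boldsymbol{\theta}}$ produces
\begin{equation*}
\widetilde{\nabla}_{\boldsymbol{\theta}}\mathcal{J} = \sum_{k=0}^T\Bigl[\tfrac{\partial r_k}{\partial \mathbf{a}_k} + \sum_{j=k+1}^T \tfrac{\partial r_j}{\partial \mathbf{s}_j}\,\Phi_{j,k+1}\,\tfrac{\partial \mathbf{s}_{k+1}}{\partial \mathbf{a}_k}\Bigr]\tfrac{\partial \mathbf{a}_k}{\partial \boldsymbol{\theta}}.
\end{equation*}
Comparing the bracketed coefficient with \eqref{eq: return gradient respect to controls} shows it is exactly $\nabla_{\mathbf{a}_k}\mathcal{J}$, closing the identity.

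The main obstacle is purely bookkeeping: carefully verifying that the decoupled recursion really does unroll into a sum over $\tfrac{\partial \mathbf{a}_k}{\partial \boldsymbol{\theta}}$ rather than $\tfrac{d\mathbf{a}_k}{d\boldsymbol{\theta}}$ (this is where decoupling matters, since dropping the $\tfrac{d \mathbf{o}}{d\mathbf{s}}$ term is what eliminates the feedback closure), and then swapping the double summation over $(t,k)$ with the correct index bounds so that the coefficient of each $\tfrac{\partial \mathbf{a}_k}{\partial \boldsymbol{\theta}}$ matches the open-loop gradient formula. The rest is a one-line substitution using $\mathbf{a}_t(\boldsymbol{\theta}) - \bar{\mathbf{a}}_t = -\beta\nabla_{\mathbf{a}_t}\mathcal{J}$.
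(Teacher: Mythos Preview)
Your proposal is correct and follows essentially the same approach as the paper: both compute $-\nabla_{\boldsymbol{\theta}}\mathcal{L}_{\text{BC}} = \sum_{t}(\nabla_{\mathbf{a}_t}\mathcal{J})\tfrac{\partial \mathbf{a}_t}{\partial \boldsymbol{\theta}}$ via the substitution $\mathbf{a}_t - \bar{\mathbf{a}}_t = -\beta\nabla_{\mathbf{a}_t}\mathcal{J}$, and then identify this with $\widetilde{\nabla}_{\boldsymbol{\theta}}\mathcal{J}$ by unrolling the decoupled recursion. The only difference is that the paper compresses the unrolling and index-swapping into a single ``rearranging terms'' remark, whereas you spell out the transition chain $\Phi_{t,k+1}$ and the double-sum reordering explicitly.
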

\begin{proof}
Given that the observation sequence $\mathbf{O}$ is generated by rolling out the policy, we have $\mathbf{a}_t = \boldsymbol{\mu}(\mathbf{o}_t; \boldsymbol{\theta}) + \boldsymbol{\sigma}(\mathbf{o}_t; \boldsymbol{\theta}) \odot \boldsymbol{\epsilon}_t$. 
Therefore, the gradient of the behavior cloning loss simplifies to
\begin{equation}
    \nabla_{\boldsymbol{\theta}} \mathcal{L}_\text{BC}(\boldsymbol{\theta}, \mathbf{O}, \bar{\mathbf{A}}, \mathcal{E}) = \sum_{t=0}^T \frac{1}{\beta} (\mathbf{a}_t - \bar{\mathbf{a}}_t) \frac{\partial \mathbf{a}_t}{\partial \boldsymbol{\theta}} \label{eq: bc gradient}.
\end{equation}
Substituting E.q.~\eqref{eq: control updates} into~\ref{eq: bc gradient} resulting in
\begin{equation}
    \nabla_{\boldsymbol{\theta}} \mathcal{L}_\text{BC}(\boldsymbol{\theta}, \mathbf{O}, \bar{\mathbf{A}}, \mathcal{E})= \sum_{t=0}^T \frac{1}{\beta}(-\beta \nabla_{\mathbf{a}_t} \mathcal{J}) \frac{\partial \mathbf{a}_t}{\partial \boldsymbol{\theta}} = -\sum_{t=0}^T (\nabla_{\mathbf{a}_t} \mathcal{J}) \frac{\partial \mathbf{a}_t}{\partial \boldsymbol{\theta}}.
\end{equation}
Finally, substituting $\nabla_{\mathbf{a}_t} \mathcal{J}$ from E.q.~\eqref{eq: return gradient respect to controls} into~\eqref{eq: bc gradient} and rearranging terms completes the proof.
\end{proof}

Theorem~\ref{theorem: policy distillation} highlights the close connection between feedback policy optimization and open-loop trajectory optimization. Iteratively applying gradient ascent with decoupled policy gradient~\eqref{eq: decoupled policy gradient} can be interpreted as alternating between two stages: (1) generating trajectories by rolling out the current policy and improving them through trajectory optimization, and (2) distilling the optimized trajectories back into the policy.

\paragraph{Control regularization}
\begin{wrapfigure}{h}{0.52\textwidth}
    \vspace{-39pt}
    \begin{center}
    \includegraphics[width=0.52\textwidth]{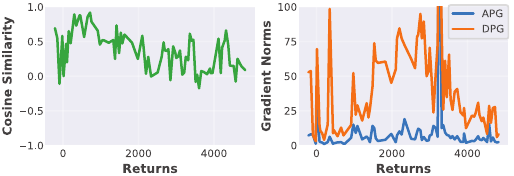}
    \end{center}
    \vspace{-10pt}
    \caption{\small{\textbf{Comparison of APG $\nabla_{\boldsymbol\theta} \mathcal{V}$ and DPG $\tilde{\nabla}_{\boldsymbol\theta} \mathcal{V}$ on Hopper with full state observation.} Both gradients are computed from the same set of $\boldsymbol\theta$ values collected during SHAC training. The x-axis shows the return for each $\boldsymbol\theta$. \textbf{Left:} Cosine similarity is generally positive, indicating $\tilde{\nabla}_{\boldsymbol\theta} \mathcal{V}$ is a valid ascent direction. \textbf{Right:} Gradient norm between APG and DPG. In this experiment, conducted in state space where $g=\texttt{identity}$, the control regularization term $\mathcal{B}$ acts as a residual connection. As a result, APG generally exhibits a smaller norm compared to DPG.}}
    \vspace{-20pt}
    \label{fig: pg_vs_dpg}
\end{wrapfigure}
We now discuss the dropped bias~\eqref{eq: control regularization}, which involves adding multiple terms of the form $\big(\frac{\partial r_t}{\partial \mathbf{a}_t} \frac{\partial \mathbf{a}_t}{\partial \mathbf{o}_t} \frac{d \mathbf{o}_t}{d \mathbf{s}_t} \frac{d \mathbf{s}_t}{d \boldsymbol{\theta}}\big)$.
Here, $\frac{\partial r_t}{\partial \mathbf{a}_t}$ captures how the reward at timestep $t$ changes with the action $\mathbf{a}_t$, while the term $\big(\frac{\partial \mathbf{a}_t}{\partial \mathbf{o}_t} \frac{d \mathbf{o}_t}{d \mathbf{s}_t} \frac{d \mathbf{s}_t}{d \boldsymbol{\theta}} \big)$ reflects how past trajectory influence current decision making, i.e., $\mathbf{a}_t$.
Altogether, the bias term~\eqref{eq: control regularization} quantifies how the previous experiences influence current actions via coupling through the shared policy, and thereby impacts long-term return.
In contrast, the decoupled policy gradient~\eqref{eq: decoupled policy gradient} captures how the current action affects future states, but ignores the interdependence between actions.

Figure~\ref{fig: pg_vs_dpg} compares the APG $\nabla_{\boldsymbol\theta} \mathcal{V}$ and DPG $\tilde{\nabla}_{\boldsymbol\theta} \mathcal{V}$, both computed with respect to the short-horizon actor loss~\eqref{eq: shac actor loss}. 
The cosine similarity between the two gradients is positive in most cases, indicating that $\tilde{\nabla}_{\boldsymbol\theta} \mathcal{V}$ generally provides a valid ascent direction for policy improvement.
Another noteworthy observation is the difference in norm between the full analytical policy gradient~$\nabla_{\boldsymbol\theta} \mathcal{V}$ and our quasi-policy-gradient estimate, DPG~$\tilde{\nabla}_{\boldsymbol\theta} \mathcal{V}$.
When we conduct the experiment on state space, i.e., $\mathbf{o}_t = \texttt{identity}(\mathbf{s}_t) = \mathbf{s}_t$, the additive control regularization~\eqref{eq: control regularization} operates acts similarly to a residual connection within the computation graph.
The additive residual connection contribute to a smoother optimization landscape, making the norm of full APG generally smaller than DPG, as illustrate on Figure~\ref{fig: pg_vs_dpg}.
This is not the case when the policy is conditioned on high-dimensional visual observations.
As we will show shortly in Section~\ref{sec: experiment}, when the sensor model $g$ is involving complex rendering process, adding the regularization term $\mathcal{B}$ tends to increase the overall gradient norm, which hinder optimization.
\paragraph{Experimental validation}
A comparison between full APG and our DPG in full-state space is provided in Appendix~\ref{sec: exp on state observation}, while results under visual observations are presented in Section~\ref{sec: experiment}.

\subsection{Decoupled visual based analytical policy gradient}
Here, we introduce Decoupled Visual-Based Analytical Policy Gradient~(D.Va), a visual policy learning method built upon the decoupled policy gradient formulation.
Our method is an on-policy algorithm that updates the policy using parallel simulation to generate short-horizon trajectories. 
Following SHAC, rollouts resume from previous endpoints and reset at task termination. 
Trajectories are discarded after each iteration to reduce I/O overhead. To capture temporal cues such as velocity and acceleration, we stack three consecutive image frames—following common practice in prior work~\citep{hafner2023dreamerv3,mu2025state2vis_dagger}. 
We also provide an ablation study on the number of stacked frames in Appendix~\ref{sec: number of frames ablation}. 
The stacked frames are then encoded by a convolutional network to produce a latent representation $\mathbf{h}_t$ for the actor.
The critic $V_\phi$ in Equation~\eqref{eq: shac actor loss} plays a crucial role in achieving good overall performance. (See Appendix~\ref{sec: value function ablation}) 
For efficiency, we train the critic in the low-dimensional state space $\mathcal{S}$ as opposed to the observation space $\mathcal{O}$. 
Although the critic is trained using state information, the policy remains state-agnostic throughout the entire training process, enabling direct deployment to downstream tasks without requiring access to privileged state information.
Full algorithm details are provided in Appendix~\ref{sec: dva implementation}.
\section{Experiment}\label{sec: experiment}
\begin{figure}[t]
    \vspace{-20pt}
    \begin{center}
    \includegraphics[width=\textwidth]{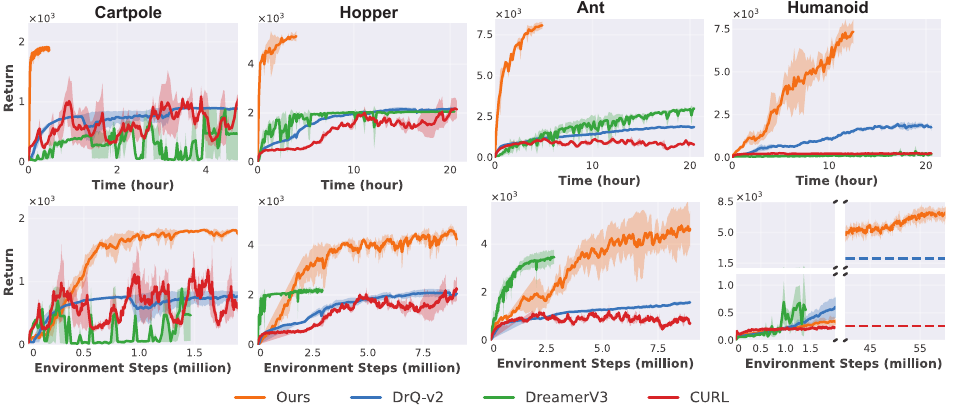}
    \end{center}
    \vspace{-15pt}
    \caption{\small{\textbf{Comparison with RL: our method achieves substantial speedups and significantly higher rewards across all tasks.} Each curve shows the average performance over five random seeds, with shaded areas indicating standard deviation. In the humanoid task, dashed lines represent the final rewards attained by each algorithm at the end of training. The top row highlights wall-clock efficiency; the bottom row illustrates sample efficiency, with curves truncated at the maximum number of simulation steps for better visualization.}}
    \vspace{-15pt}
    \label{fig: compare with RL}
\end{figure}
\begin{wrapfigure}{h}{0.37\textwidth}
    \vspace{-60pt}
    \begin{center}
    \includegraphics[width=0.37\textwidth]{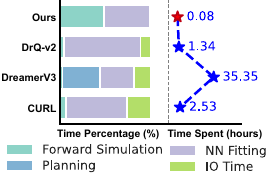}
    \end{center}
    \vspace{-8pt}
    \caption{\small{\textbf{Training time for Ant over 1M steps.} Left: phase percentages, where ``planning'' in DreamerV3 refers to rollouts by learned world model. Right: absolute times used per 1M steps. 
    Most time in visual RL is spent fitting neural networks.}}
    \vspace{-30pt}
    \label{fig: time percentage}
\end{wrapfigure}
We design our experiments to compare the proposed method against common visual policy learning algorithms on GPU-accelerated simulation. 
Performance is evaluated based on final return, wall-clock time, and the number of environment steps. 
All hyperparameters are listed in Appendix~\ref{sec: hyperparameters}, while additional details on setup are provided in the Appendix~\ref{sec: setup}.
\subsection{Comparison to RL methods}
\paragraph{Baseline} 
We compare our method with: (1) DrQ-v2~\citep{yarats2021drqv2}, a model-free method combining image augmentations with DDPG~\citep{lillicrap2015continuous}; (2) CURL~\citep{laskin_srinivas2020curl}, a model-free RL approach using contrastive learning and SAC~\citep{haarnoja2018soft}; and (3) DreamerV3~\citep{hafner2023dreamerv3}, a model-based algorithm that learns a world model and uses it for planning.
All RL baselines are implemented with parallelized simulation to take advantage of faster forward rollout.
\begin{figure}[tb]
    \vspace{-5pt}
    \begin{center}
    \includegraphics[width=\textwidth]{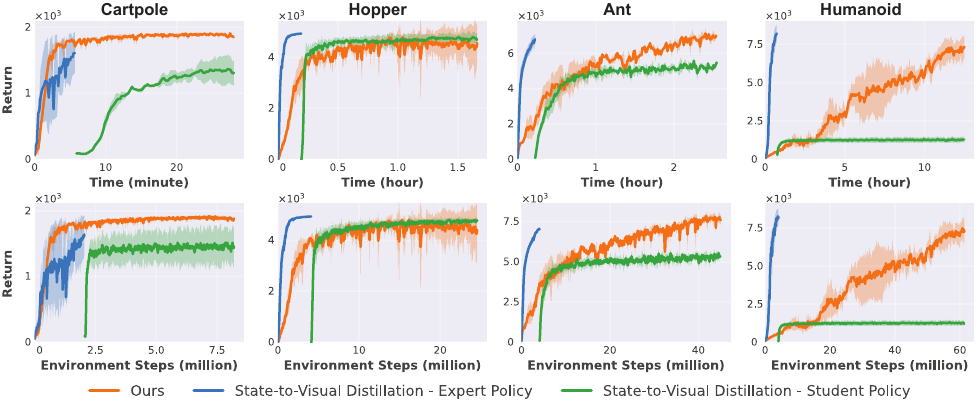}
    \end{center}
    \vspace{-10pt}
    \caption{\small{\textbf{Compare with State-to-Visual distillation:} Our method matches distilled policy in final return and computation time on three simple tasks, but significantly outperforms it on the more challenging Humanoid locomotion task. The student policy’s x-axis is shifted to account for expert training time.  Our implementation builds on~\citet{mu2025state2vis_dagger}, with an enhanced expert training phase by replacing SAC with SHAC.}}
    \vspace{-5pt}
    \label{fig: compare with Dagger}
\end{figure}
\paragraph{Results}

Our approach achieves comparable sample efficiency to existing methods; however, it excels in wall-clock time and final returns, as shown in Figure~\ref{fig: compare with RL}.
The discrepancy between sample and wall-clock time efficiency is because RL methods reuse past experiences through replay buffers, whereas our on-policy method discards samples after each iteration.
Consequently, RL methods spend more time updating neural networks—such as fitting Q-functions in DrQ-v2—than collecting new data.
As illustrated in Figure~\ref{fig: time percentage}, only a negligible proportion of time is spent on forward simulation for the RL baselines, indicating limited potential for speedup from faster simulators. 
In contrast, our method allocates a comparable amount of time to both forward simulation and backward policy updates, suggesting it could further benefit from better simulation.

\subsection{Comparison to method using privileged simulation}

\paragraph{State-to-visual distillation} Another class of popular methods~\citep{loquercio2021learning, chen2023visual} training visual policies by first learning an expert policy with privileged state access, then transferring knowledge to a visual policy via DAgger~\citep{ross2011reduction}. 
Among these, \citet{mu2025state2vis_dagger} proposes two key design choices—early stopping when the behavior cloning loss is low and using off-policy data from a replay buffer, which greatly reduces computation and improves performance.
Our implementation of State-to-Visual distillation is based on the approach proposed by~\citet{mu2025state2vis_dagger}, with one key modification: we use SHAC in place of SAC for expert training. 
We observe that SHAC consistently outperforms model-free SAC in settings where differentiable simulation is available~\cite{xu2021shac}, resulting in further reduced computational time and improved final returns for training State-to-Visual distillation.

\paragraph{Results}
\begin{figure}[tb]
    \begin{center}
    \includegraphics[width=\textwidth]{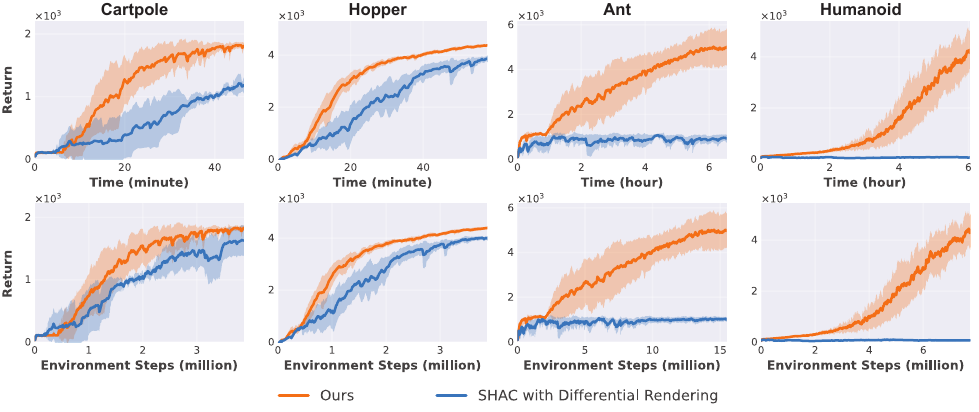}
    \end{center}
    \vspace{-10pt}
    \caption{\small{\textbf{Compare to SHAC with differential Rendering:} Our method matches SHAC on 2D tasks (Cartpole and Hopper) and outperforms SHAC on more challenging 3D tasks (Ant and Humanoid).}}
    \vspace{-15pt}
    \label{fig: compare with diff rendering}
\end{figure}
Our method performs comparably to state-to-visual distillation on three relatively easy tasks in terms of both computation time and final rewards.
In these tasks, both approaches achieve returns on par with the expert policy trained in the state space.
However, on the more challenging humanoid running task, our method significantly outperforms state-to-visual distillation—achieving high final returns, while distilled policy plateaus at substantially lower values.
\paragraph{Key difference}
Here, we highlight the key difference between our method and state-to-vision distillation.
While both can be viewed as behavior cloning from another policy, the source of the mimicked policy differs fundamentally.
In state-to-visual distillation, actions are imitated from a frozen ``expert'' policy.
In contrast, our method mimics actions from the ``teacher'' that provides incremental improvements to the current policy.
We argue that learning from an incremental ``teacher'' may be more effective than learning from a fixed ``expert'', especially for complex tasks.
First, expert actions may differ significantly from those produced by the current policy, making them harder to imitate accurately.
Second, expert policy may provide ineffective corrective feedback in those state spaces that are visited by the current policy but rarely seen during its own training phase. 
In other words, the expert cannot handle situations for which it has no prior experience.
We hypothesize that these two factors contribute to the performance gap observed on the humanoid task.

Another subtle but noteworthy difference is in the postures generated by the learned policies. 
As illustrated in Figure~\ref{fig: ant posture views}, we empirically find that our method tends to produce more camera-aware behaviors, whereas the distilled policy often results in self-occluded postures.
Although both the ``expert'' policy used in distillation and the ``teacher'' corrections in our method are agnostic to camera views when providing supervision, the on-policy and iterative nature of our approach may lead to important differences.
During the student policy distillation phase, imitating actions from unblocked visual inputs may be easier and converge quicker than learning from occluded views, which can introduce ambiguity.
Since our method continually discards outdated rollouts and relies on recent data, the training process may be implicitly biased toward favoring trajectories that offer clearer, more informative perspectives.
\begin{figure}[tb]
    \vspace{-10pt}
    \begin{center}
    \includegraphics[width=\textwidth]{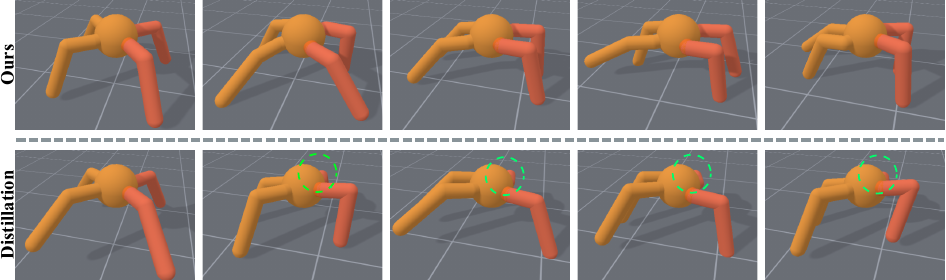}
    \end{center}
    \vspace{-10pt}
    \caption{\small{Our method learns policies that are camera-aware, whereas the distilled policy from expert often adopts postures that are partially occluded or blocked from the camera view.}}
    \vspace{-18pt}
    \label{fig: ant posture views}
\end{figure}

\subsection{Comparison to SHAC with differentiable rendering}
\begin{wrapfigure}{h}{0.5\textwidth}
    \vspace{-20pt}
    \begin{center}
    \includegraphics[width=0.5\textwidth]{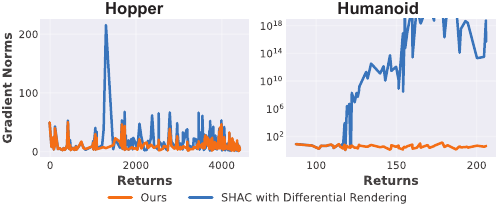}
    \end{center}
    \vspace{-10pt}
    \caption{\small{\textbf{Comparison of gradient norms between SHAC with differentiable rendering and D.VA}: In this experiment, conducted in the visual space where sensor model $g$ represents a complex rendering process, the control regularization term $\mathcal{B}$ adds to a noisy optimization landscape. As a result, SHAC generally exhibits a larger gradient norm compared to D.VA.}}
    \vspace{-15pt}
    \label{fig: grad comparison with render}
\end{wrapfigure}
Finally, we compare our method against analytical policy gradient approaches that incorporate differentiable rendering.
To the best of our knowledge, no existing open-source baseline combines differentiable rendering with analytical policy gradient method; therefore, we provide our own implementation.

We implement differentiable rendering using PyTorch3D~\citep{ravi2020pytorch3d}, in contrast to prior works~\citep{wiedemannwueest2023apg, liu2024differentiablerobotrendering}, which rely on a learned renderer.
Using a differentiable renderer based on computer graphics eliminates the need to optimize a separate neural network and helps avoid compounding errors from distribution shifts as the scene evolves.
Further details on our parallel rendering setup are provided in Appendix~\ref{sec: diff render}.
We then train the visual policy end-to-end under the SHAC framework. 
For a fair comparison, we use identical simulations and neural architectures for both methods, and the value functions are all defined on a low-dimensional state space.
\paragraph{Results}
Figure~\ref{fig: compare with diff rendering} compares our method with SHAC across four benchmark problems.
We find the performance of SHAC highly dependent on whether the task is 2D or 3D.
In the 2D tasks, i.e., Cartpole and Hopper, SHAC achieves performance similar to ours. 
However, in 3D tasks, our method consistently outperforms SHAC, with SHAC failing to learn effective locomotion.
We hypothesize that the discrepancy arises from the noisy optimization landscape introduced by the complex rendering process.
In contrast to the low-dimensional state space—where the control regularization term $\mathcal{B}$~\eqref{eq: control regularization} acts as a residual connection and the norm of the full APG is generally smaller than that of DPG (see Figure\ref{fig: pg_vs_dpg})—the high-dimensional visual space involves a more complex sensor model $g$, which includes 3D transformations and a rasterization process.
As a result, the additive control regularization term contributes to a noisier optimization landscape.
As shown in Figure~\ref{fig: grad comparison with render}, the gradient norm of SHAC with differentiable rendering is generally larger than that of our method.
Notably, for 3D tasks, the gradient norm in SHAC can rapidly exceed $10^{15}$, making the backward signal pure noise.
In addition to smoother optimization, several other factors make our method preferable to SHAC for training visual policy in practice.
First, our method is significantly more memory efficient, as shown in Figure~\ref{fig: memory consumption}. 
\begin{wraptable}{r}{0.52\textwidth}
  \vspace{-12pt}
  \small
  \vspace{-5pt}
  \caption{\small{Backward time for a single training episode.}}
  \label{tab: backward simulation}
  \centering
  \begin{tabular}{ccccc}
    \toprule
     & &Hopper &Ant &Humanoid \\
    \midrule
    Ours &0.11(s) &0.19(s) &0.21(s) &0.68(s) \\
    \midrule
    SHAC &0.36(s) &0.51(s) &0.58(s) &1.38(s) \\
    \bottomrule
  \end{tabular}
  \vspace{-8pt}
\end{wraptable}
Second, as scene complexity increases, i.e., the number of meshes and the number of vertices per mesh, SHAC's memory usage grows rapidly. 
This is because the Jacobian with respect to each mesh vertex must be stored to construct the computation graph.
In contrast, our method avoids storing these large Jacobians, resulting in a relatively stable memory footprint. 
Therefore, our method is more suitable for tasks involving a greater number of objects and higher-resolution meshes.
Third, our method reduces the computational overhead during neural network updates by avoiding the multiplication of large Jacobian matrices associated with rendering during the backward pass.
As shown in Table~\ref{tab: backward simulation}, our backward simulation is 2–3$\times$ faster than SHAC, measured on the same machine.
Lastly, developing high-quality differentiable rendering software demands substantial engineering effort.
In contrast, our method does not depend on such software, enabling easier integration into existing simulation ecosystems~\citep{todorov2012mujoco,Genesis}, with the potential to handle multi-modal observation such as point clouds or LiDAR scans.
\begin{wrapfigure}{h}{0.25\textwidth}
    \vspace{-50pt}
    \begin{center}
    \includegraphics[width=0.25\textwidth]{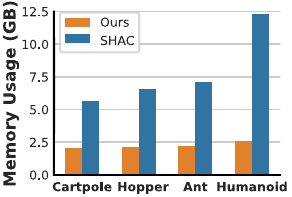}
    \end{center}
    \vspace{-10pt}
    \caption{\small{\textbf{Peak memory usage during training. Our method is 3–5$\times$ more memory efficient than SHAC.} Each task is run with 32 parallel environments and a short horizon length of 32.}}
    \vspace{-15pt}
    \label{fig: memory consumption}
\end{wrapfigure}

\section{Related Work}\label{sec: related work}

\paragraph{Visual policy learning}
Techniques integrating learned image encoders~\citep{finn2015learning,mnih2015human} have shown promising results in visual policy learning. Building on these approaches, algorithms such as~\citet{kostrikov2020image, yarats2021drqv2} improve performance through data augmentation,~\citet{laskin_srinivas2020curl, laskin2020reinforcement, stooke2021decoupling} leverage contrastive learning, and~\citet{hafner2019learning_latent_dynamics, hafner2023dreamerv3, hansen2022temporal} address the visual-control problem by learning world models for online planning. 
While these methods effectively tackle image-control challenges, they often require extensive environmental interaction and suffer from computational inefficiencies.
In this work, we introduce a method that leverages differentiable simulation to reduce training time.

\paragraph{Policy learning with differentiable simulation}
Analytical policy gradient methods~\citep{brax2021github,qiao2021efficient,mora2021pods} have gained traction with the rise of differentiable simulation. 
Among them, SHAC~\citep{xu2021shac} mitigate noisy optimization via short-horizon rollouts and a value function, making it a core technique in many downstream robotics applications~\citep{schwarke2024learning,song2024learning}.
In a subsequent work, \citet{xing2024stabilizing} further enhanced performance by adding an entropy term to the optimization objective, smoothing the optimization landscape and improving stability.

Recent efforts have adapted APG to visual policy learning: \citet{wiedemannwueest2023apg} incorporates differentiable rendering, while \citet{heeg2024quadrotot_vis_apg} uses cropped vision features for quadrotor racing. \citet{luo2024residualpolicylearningperceptive} propose a hierarchical design separating control and perception. 
However, these approaches remain task-specific and engineering-heavy.
In contrast, our method generalizes across tasks and achieves higher computational efficiency than prior visual policy learning methods.
\section{Conclusion}\label{sec: conclusion}

Our algorithm, D.Va, is a computationally efficient method for training visual policies utilizing differentiable simulation and first-order policy gradients. 
With this approach, we are able to train complex visual policies within hours using modest computational resources. 
We believe that this improvement in computational efficiency can unlock new possibilities for the robotics community, enabling practical end-to-end training of policies from raw observations.
A key limitation of our method is that its success depends heavily on the quality and accuracy of the simulation environment. 
However, this limitation is shared by all baseline methods, as training directly in the real world is often impractical. 
Future research should focus on how to effectively transfer the success of training visual policies in simulation to real-world scenarios.
\begin{ack}
This work is supported by the National Science Foundation under award NSF FRR 2238066. 
Any opinions, findings, and conclusions or recommendations expressed in this material are those of the authors and do not necessarily reflect the views of the National Science Foundation.

We additionally thank the reviewers for their valuable suggestions and Davis Zong for assisting with additional experiments during the paper revision.
\end{ack}
\bibliography{reference}
\bibliographystyle{plainnat}


\appendix
\newpage
\section*{Appendix}
\section{Additional derivation}\label{sec: additional derivation}
\subsection{Derivative of analytical policy gradient separation}
In this section, we provide a detailed derivation of how the analytical policy gradient $\nabla_{\boldsymbol\theta} \mathcal{V}$~\eqref{eq: policy gradient} can be decomposed into the decoupled policy gradient $  \tilde{\nabla}_{\boldsymbol\theta} \mathcal{V}$~\eqref{eq: decoupled policy gradient} and control regularized term $\mathcal{B}$~\eqref{eq: control regularization}.
We achieve this decomposition through pattern matching.

We begin with the decoupled policy gradient~\eqref{eq: decoupled policy gradient}, which is obtained by taking the expectation over the gradient of individual trajectories.
The gradient of each trajectory, in turn, is computed by summing the gradients of the temporal running rewards, as follows:
\begin{equation}
\tilde{\nabla}_{\boldsymbol{\theta}} \mathcal{J}(\mathbf{s}_0, \boldsymbol{\theta}, \mathcal{E})   = \sum_{t=0}^T \tilde{\nabla}_{\boldsymbol{\theta}} r_t.
\end{equation}
The gradient of the running reward, $\tilde{\nabla}_{\boldsymbol{\theta}} r_t$, is composed of two partial derivatives.
First, the immediate reward $r_t$ is directly influenced by the action taken at time step $t$, denoted $\mathbf{a}_t$. 
This yields the partial derivative term: 
\[\frac{\partial r_t}{\partial \textbf{a}_t} \frac{\partial \textbf{a}_t}{\partial \boldsymbol{\theta}}.\]
Second, the reward $r_t$ also depends on the state $\mathbf{s}t$, which itself depends on the previous state $\mathbf{s}{t-1}$ and action $\mathbf{a}_{t-1}$. 
In the decoupled formulation, this dependency propagates backward through time, leading to a recursive gradient computation. 
Specifically, the second term is
\[\frac{\partial r_t}{\partial \textbf{s}_t} \tilde{\frac{d \textbf{s}_t}{d \boldsymbol{\theta}}},\] 
where the derivative $\tilde{\frac{d \mathbf{s}_t}{d \boldsymbol{\theta}}}$ is given recursively by: $\tilde{\frac{d \textbf{s}_t}{d \boldsymbol{\theta}}} = \frac{\partial \mathbf{s}_t}{\partial \mathbf{s}_{t-1}} \tilde{\frac{d \mathbf{s}_{t-1}}{d \boldsymbol{\theta}}} + \frac{\partial \mathbf{s}_t}{\partial \textbf{a}_{t-1}}\frac{\partial \textbf{a}_{t-1}}{\partial \boldsymbol{\theta}}$. 
Altogether, we have 
\begin{equation}
\tilde{\nabla}_\theta r_t = \frac{\partial r_t}{\partial \textbf{a}_t} \frac{\partial \textbf{a}_t}{\partial \boldsymbol{\theta}} + \frac{\partial r_t}{\partial \textbf{s}_t} \tilde{\frac{d \textbf{s}_t}{d \boldsymbol{\theta}}}.
\end{equation}
Figure~\ref{fig: backward flow} illustrates a typical backward flow on decoupled policy gradient starting with the reward $r_2$.

\begin{figure}[thb]
    \vspace{0pt}
    \begin{center}
    \includegraphics[width=\textwidth]{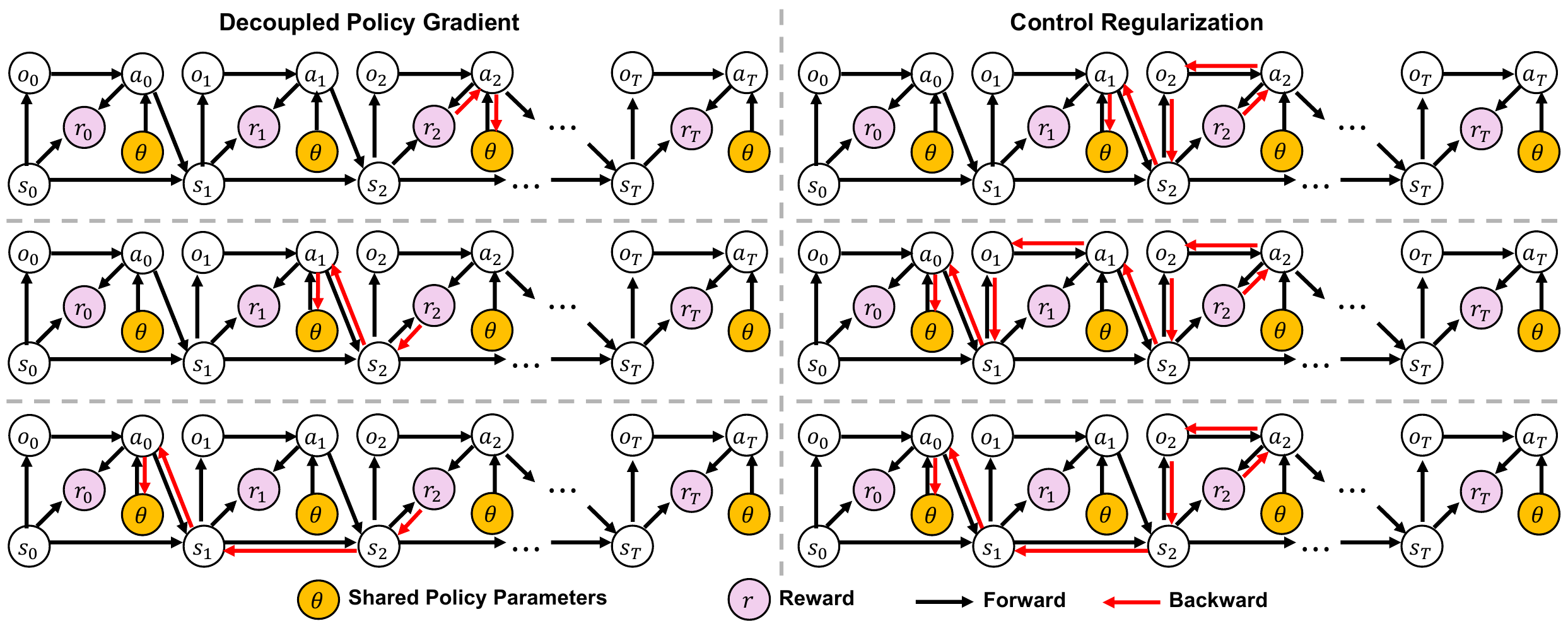}
    \end{center}
    \vspace{-10pt}
    \caption{\small{\textbf{Backward gradient start with $r_2$.} Left: the backward gradient flow on the decoupled policy gradient. Right: the backward flow on control regularization. Each subplot in the row starts from the reward node $r_2$ and traces backward along the computation graph until it reaches the policy parameters $\boldsymbol{\theta}$, illustrating a partial contribution to the total derivative.}}
    \vspace{-0pt}
    \label{fig: backward flow}
\end{figure}
We now examine how the control regularization term~\eqref{eq: control regularization} is formulated.
Following the structure illustrated in Figure~\ref{fig: backward flow}, the control regularization is constructed by summing a series of terms of the form: \[\frac{\partial r_t}{\partial \mathbf{a}_t} \frac{\partial \mathbf{a}_t}{\partial \mathbf{o}_t} \frac{d \mathbf{o}_t}{d \mathbf{s}_t} \frac{d \mathbf{s}_t}{d \boldsymbol{\theta}}.\]
Here, the total derivative $\frac{d \mathbf{s}_t}{d \boldsymbol{\theta}}$ can be computed recursively as follows: 
\begin{equation}
    \frac{d\mathbf{s}_t}{d\boldsymbol{\theta}} = \frac{\partial \mathbf{s}_t}{\partial \mathbf{a}_{t-1}}\big(\frac{\partial \mathbf{a}_{t-1}}{\partial \boldsymbol{\theta}} + \frac{\partial \mathbf{a}_{t-1}}{\partial \mathbf{o}_{t-1}} \frac{d \mathbf{o_{t-1}}}{d \mathbf{s}_{t-1}} \frac{d \mathbf{s}_{t-1}}{d \boldsymbol{\theta}} \big) + \frac{\partial \mathbf{s}_t}{\partial \mathbf{s}_{t-1}} \frac{d \mathbf{s}_{t-1}}{d \boldsymbol{\theta}}.
\end{equation}
Altogether, the control regularization term is given in Eq.~\eqref{eq: control regularization}.
Thus, the full analytical policy gradient is the sum of the decoupled policy gradient and the control regularization term:
\begin{equation}
    \nabla_{\boldsymbol\theta} \mathcal{V} =  \tilde{\nabla}_{\boldsymbol\theta} \mathcal{V} + \mathcal{B}.
\end{equation}
\section{Additional implementation details} \label{sec: implementation details}

\subsection{D.VA algorithm} \label{sec: dva implementation}

\paragraph{Critic Learning}
Our critic training follows SHAC, minimizing the mean squared error over collected trajectories:
\begin{equation}
    \mathcal{L}_\phi = \mathbb{E}_{\mathbf{s} \in \{\tau^{(i)}\}} \Big[\| V_\phi(s) -  \tilde{V}(\mathbf{s})\|_2^2\Big], \label{eq: value function loss}
\end{equation}
where
\begin{equation}
    \tilde{V}(\mathbf{s}_t) = (1-\lambda) \Big( \sum_{k=1}^{h-t-1} \lambda^{k-1}G_t^k\Big) + \lambda^{h-t-1} G_t^{h-t},
\end{equation}
is the estimated value function and is treated as a constant target during critic learning.
Here, $G_t^k = \left(\sum_{l=0}^{k-1} \gamma^l R(\mathbf{s}_{t+l}, \mathbf{a}_{t+l}) \right) + \gamma^k V_{\phi'} (\mathbf{s}_{t+k})$ represents the $k$-step return from time $t$, and $V_{\phi'}$ is a delayed critic function used to stabilize the training process~\citep{mnih2015human}.

\paragraph{Full algorithm}
Below, we summarize our Decoupled Visual-Based Analytical Policy Gradient (D.VA) algorithm.
\begin{algorithm}[h]
  \caption{D.VA (Decoupled Visual Based Analytical Policy Gradient)}
  \label{alg:dva}
  \begin{algorithmic}
    \STATE \textbf{Function} \texttt{Rollout()}
    \bindent
    \STATE Initialize $N$ initial states $\mathbf{s}_0$.
    \FOR {$t = 0$ to $h-1$}
      \STATE \texttt{with torch.no\_grad():} 
      \bindent 
      \STATE compute pixel images $\mathbf{o}_t = g(\mathbf{s}_t)$.
      \eindent
      \STATE Sample actions $a_t \sim \pi_{\boldsymbol{\theta}}(\mathbf{o}_t)$, simulate and compute rewards $r_t$ and next states $\mathbf{s}_{t+1}$.
    \ENDFOR
    \STATE Collect $N$ trajectories $\tau = \{(\mathbf{s}_t, a_t, r_t)\}_{t=0}^{h-1}$ and compute actor loss $\mathcal{L}_{\boldsymbol{\theta}}$ via Eq.~\eqref{eq: shac actor loss}.
    \STATE \textbf{Return} $\tau$, $\mathcal{L}_{\boldsymbol{\theta}}$
    \eindent
    \STATE \textbf{Function} \texttt{Main()}
    \bindent
    \STATE Initialize $\pi_{\boldsymbol{\theta}}$, $V_\phi$, $V_{\phi'} \leftarrow V_\phi$
    \FOR {$epoch = 1$ to $M$}
      \STATE Generate $N$ short-horizon trajectories $\tau$ and compute actor loss $\mathcal{L}_{\boldsymbol{\theta}}$ by calling \texttt{Rollout()}.
      \STATE Compute decoupled policy gradient $\tilde{\nabla}_{\boldsymbol\theta} \mathcal{V}$ and update $\pi_{\boldsymbol{\theta}}$ with Adam.
      \STATE Fit value function $V_\phi$ via critic loss~\eqref{eq: value function loss} and update delayed target $V_{\phi'} \leftarrow \alpha V_{\phi'} + (1-\alpha) V_{\phi}$.
    \ENDFOR
    \eindent
  \end{algorithmic}
\end{algorithm}

\subsection{Differentiable Rendering} \label{sec: diff render}

\begin{figure}[h]
    \begin{center}
    \includegraphics[width=\textwidth]{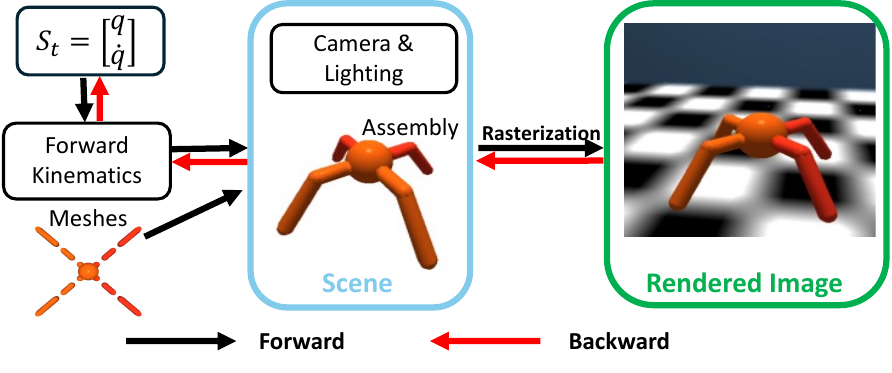}
    \end{center}
    \vspace{-10pt}
    \caption{Diagram for differentiable rendering}
    \vspace{-10pt}
    \label{fig: diff render diag}
\end{figure}

In this section, we present the construction of our differentiable renderer, designed to facilitate end-to-end training.

Given a state vector $\mathbf{s} = [\mathbf{q}^\top, \dot{\mathbf{q}}^\top]^\top$, where $\mathbf{q} \in \mathbb{R}^n$ represents joint positions and $\dot{\mathbf{q}}$ denotes joint velocities, which have dimension $m-1$ or $m$ depending on the presence of quaternions.
We compute $m$ homogeneous transformations $\mathbf{T}$ for forward kinematics. 
These transformations encode the rotation and translation from the world frame to each joint frame and are used to transform geometry meshes defined in local joint frames to assemble the full robot mesh.
Lighting and camera poses are also updated using these transformations to set up the full scene. 
Finally, we render the scene using PyTorch3D via rasterization.
Figure~\ref{fig: diff render diag} summarizes the whole process.
\section{Hyperparameters} \label{sec: hyperparameters}

In this section, we describe our hyperparameter tuning process and share additional insights gained during experimentation. 
Detailed hyperparameter values are provided at the end.

\subsection{How we tune hyperparameters} \label{sec: how we tune hyperparameters}

For all baseline methods, we initialized hyperparameters using values reported in the original papers. 
We then identified the most sensitive hyperparameters and tuned them sequentially. 
Additional attention is given to those hyperparameters emphasized in the original works. 
This process involved over 500 experiments, with some trials manually terminated early when it became evident that the chosen hyperparameters were suboptimal.
In contrast, we did not apply additional tuning to our proposed method, as the combination of hyperparameters from state-based SHAC and the encoder architecture from DrQV2 already yielded strong performance.

We summarize our key findings from the tuning process and present the final hyperparameter values used in the following sections.
\subsection{Key findings and experimental settings}

\paragraph{DrQ-v2}

We implement DrQ-v2 with parallelized forward simulation to improve wall-clock time efficiency. 
During each training episode, we concurrently collect new samples using the current policy across multiple parallel environments. 
After data collection, we perform several update steps on the Q-functions using the data from the replay buffer.
The forward pass accounts for only a small portion of the total training time with parallelization, as shown in Figure~\ref{fig: time percentage}. 
The majority of computation time is instead dedicated to updating the agent's neural network, i.e., Q-function and Actor.
We found that the ratio between agent update steps and new sample collection plays a critical role in achieving good performance. 
While performing multiple agent updates per environment step can improve sample efficiency, it may negatively impact wall-clock time performance. 
In addition, we observe that excessive updates can harm final policy performance due to outdated data from the replay buffer. 
Therefore, we carefully tune both the number of parallel environments and the number of updates per training episode, aiming to maximize wall-clock efficiency and final return.
The hyperparameters used in our experiments are listed in Table~\ref{tab:drqv2 training parameters}.
The primary difference from the original DrQ-v2 setup is that we increase both the number of parallel environments and the number of agent updates per step, while the ratio is kept the same.
Unless otherwise noted, the same parameters are used across all four tasks.
\begin{table}[h!]
\caption{DrQ-v2 training parameters}
\label{tab:drqv2 training parameters}
\begin{center}
\begin{tabular}{cc}
\toprule
\textbf{Parameter name} &\textbf{Value}  \\
\midrule
\midrule
Number of parallel environments & 32; 16 for Cartpole  \\
\hline
Number of agent updates & 16; 8 for Cartpole \\
\hline
 Replay buffer capacity & $10^6$  \\
\hline
 Action repeat& 2 \\
 \hline
 Mini-batch size &256\\
 \hline
 N-step returns &3 \\
 \hline
 Discount factor $\gamma$ & 0.99\\
 \hline
 Learning rate & $10^{-4}$; ($8 \times 10^{-5}$ for Humanoid) \\
 \hline
Critic Q-function soft-update rate &0.01\\
\hline
Exploration stddev. clip &0.3 \\
\hline
Exploration stddev. schedule &\begin{tabular}[t]{@{}c@{}}
\texttt{linear}(1.0, 0.1, 500000); \\
\texttt{linear}(1.0, 0.1, 2000000) for humanoid
\end{tabular} \\
\bottomrule
\end{tabular}
\end{center}
\end{table}
We kept neural architecture identical to that used in the original paper and summarize in Table~\ref{tab:drqv2 encoder} and~\ref{tab: drqv2 actor critic}:
\begin{table}[h!]
\caption{DrQV2 encoder architectures}
\label{tab:drqv2 encoder}
\begin{center}
\begin{tabular}{cc}
\toprule
\textbf{Parameter name} &\textbf{Value}  \\
\midrule
\midrule
Input image size~(height $\times$ width) & $84 \times 84$  \\
\hline
Convolution kernel size & 3, 3, 3, 3 \\
\hline
Convolution output channel size & 32, 32, 32, 32 \\
\hline
Convolution activation function & Relu \\
\bottomrule
\end{tabular}
\end{center}
\end{table}

\begin{table}[h!]
\caption{DrQ-v2 actor-critic architecture}
\label{tab: drqv2 actor critic}
\begin{center}
\begin{tabular}{lccc}
\toprule
\textbf{Task name} &\textbf{Trunk size} &\textbf{Policy network} &\textbf{Critic network}\\
\midrule
\midrule
Cartpole &50 &[1024, 1024] &[1024, 1024] \\
\hline
Hopper &50 &[1024, 1024] &[1024, 1024]\\
\hline
Ant &50 &[1024, 1024] &[1024, 1024]\\
\hline
Humanoid &100 &[1024, 1024] &[1024, 1024]\\
\bottomrule
\end{tabular}
\end{center}
\end{table}

\paragraph{CURL}

We also apply parallelized forward simulation to CURL to accelerate training. 
Similar to DrQ-v2, the update frequency plays a critical role in achieving strong performance. 
We tune the number of parallel environments to optimize wall-clock efficiency within the available GPU memory budget.
The training parameters used for CURL are listed in Table~\ref{tab:curl training parameters}.
Notably, CURL uses a similar architecture to DrQ-v2 (see Table~\ref{tab:drqv2 encoder}, and \ref{tab: drqv2 actor critic}); both are adopted from~\citet{yarats2020improvingsampleefficiencymodelfree}. 
To ensure a fair comparison, we use the same architecture for both algorithms, avoiding confounding factors introduced by architectural differences.

\begin{table}[h!]
\caption{CURL training parameters}
\label{tab:curl training parameters}
\begin{center}
\begin{tabular}{cc}
\toprule
\textbf{Parameter name} &\textbf{Value}  \\
\midrule
\midrule
Number of parallel environments & 16  \\
\hline
Number of agent updates & 8 \\
\hline
 Replay buffer capacity & $10^5$  \\
\hline
 Action repeat& 2 \\
 \hline
 Batch size &32\\
 \hline
 Discount factor $\gamma$ & 0.99\\
 \hline
 Actor learning rate & $10^{-3}$ \\
 \hline
Critic learning rate & $10^{-3}$\\
\hline
Adam ($\beta_1, \beta_2$) for actor and critic & (0.9, 0.99) \\
\hline
Q-function soft-update rate &0.01\\
\hline
Initial temperature &0.1\\
\hline
Temperature learning rate & $10^{-4}$ \\
\hline
Adam ($\beta_1, \beta_2$) for temperature & (0.5, 0.99) \\
\bottomrule
\end{tabular}
\end{center}
\end{table}

\paragraph{DreamerV3}
Our implementation of DreamerV3 builds upon the open-source repository available at \url{https://github.com/NM512/dreamerv3-torch}.
We parallelized the environment stepping in our implementation; however, we observed that this parallelization has minimal impact on performance—consistent with the findings reported by the author of the repository.
Additionally, DreamerV3 is a memory-intensive algorithm, which limits the degree of parallelism we can apply.
The detailed hyperparameters for training are mostly kept the same as those used in~\cite{hafner2023dreamerv3} and listed in Table~\ref{tab:dreamer training parameters}.
The neural architecture remains unchanged from the~\cite{hafner2023dreamerv3}.
\begin{table}[h!]
\caption{DreamerV3 training parameters}
\label{tab:dreamer training parameters}
\begin{center}
\begin{tabular}{cc}
\toprule
\textbf{Parameter name} &\textbf{Value}  \\
\midrule
\midrule
Image size~(height $\times$ width) & (64 $\times$ 64)\\
\hline
Number of parallel environments & 4  \\
\hline
Batch size & 16\\
\hline
Batch length & 64 \\
\hline
Train ratio & 512 \\
\hline
Action repeat & 2 \\
\hline
 Replay buffer capacity & $10^6$  \\
\hline
 Action repeat& 2 \\
 \hline
 Discount factor $\gamma$ & 0.997\\
 \hline
 Discount lambda $\lambda$ & 0.95\\
 \hline
 Actor learning rate & $3 \times 10^{-5}$ \\
 \hline
Critic learning rate & $3 \times 10^{-5}$\\
\hline
Actor-critic adam epsilon & $10^{-5}$ \\
\hline
World model learning rate & $10^{-4}$ \\
\hline
World model adam epsilon & $10^{-8}$ \\
\hline
Critic EMA decay &0.98 \\
\hline
Reconstruction loss scale &1.0\\
\hline
Dynamics loss scale &0.5\\
\hline
Representation loss scale & $0.1$ \\
\hline
Actor entropy scale & $3 \times 10^{-4}$ \\
\hline
Return normalization decay & 0.99\\
\bottomrule
\end{tabular}
\end{center}
\end{table}

\paragraph{D.Va~(Ours)}
Architecture details:
Stacked images are first processed by an encoder to generate a hidden state.
The encoder is a 4-layer convolutional network, identical to that used in DrQ-v2~(Table~\ref{tab:drqv2 encoder}).
The hidden state is then passed to the actor network to generate actions.
The actor network consists of a trunk network and a policy network, following the design of DrQ-v2.
The trunk network is a single linear layer followed by layer normalization.
The policy and critic network is adopted from the state-based SHAC: MLP network with ELU activation and layer normalization.
The detailed network architectures are provided in Table~\ref{tab: our actor critic}.

\begin{table}[h!]

\caption{D.Va actor-critic architecture}
\label{tab: our actor critic}
\begin{center}

\begin{tabular}{lccc}
\toprule
\textbf{Task name} &\textbf{Trunk size} &\textbf{Policy network} &\textbf{Critic network}\\
\midrule
\midrule
Cartpole &64 &[64, 64] &[64, 64] \\
\hline
Hopper &128 &[128, 64, 32] &[64, 64]\\
\hline
Ant &128 &[128, 64, 32] &[64, 64]\\
\hline
Humanoid &256 &[256, 128] &[128, 128]\\
\bottomrule
\end{tabular}
\end{center}
\end{table}
For training, we apply a linear decay schedule to adjust the learning rate over episodes, with specific hyperparameters provided in Table~\ref{tab:our 
 training hyperparameters}.

\begin{table}[h!]

\caption{D.Va training parameters}
\label{tab:our training hyperparameters}
\begin{center}

\begin{tabular}{l|c|c|c|c}
\toprule
 \textbf{Parameter name} &  \textbf{Cartpole} & \textbf{Hopper} & \textbf{Ant} & \textbf{Humanoid} \\ 
\midrule
\midrule
Short horizon length $h$ & \multicolumn{4}{c}{$32$}  \\
\hline
Number of parallel environments $N$ & \multicolumn{4}{c}{$64$} \\
\hline
Actor learning rate & \multicolumn{4}{c}{$0.002$} \\
\hline
Critic learning rate &\multicolumn{2}{c|}{$0.0002$} &$0.002$ &$0.0005$ \\
\hline
Target value network $\alpha$ &  \multicolumn{3}{c|}{$0.2$} & $0.995$ \\
\hline
Discount factor $\gamma$ &  \multicolumn{4}{c}{$0.99$} \\
\hline
Value estimation $\lambda$ &  \multicolumn{4}{c}{$0.95$} \\
\hline
Adam $(\beta_1,\beta_2)$ &  \multicolumn{4}{c}{$(0.7,0.95)$} \\
\hline
Number of critic training iterations &  \multicolumn{4}{c}{$16$} \\
\hline
Number of critic training minibatches &  \multicolumn{4}{c}{$4$} \\
\bottomrule
\end{tabular}
\end{center}
\end{table}

\paragraph{State-to-visual Distillation}

The architecture is kept identical to that used for D.Va, which can be found in Table~\ref{tab:drqv2 encoder} and~\ref{tab: our actor critic}.
As described earlier, the architecture is constructed by simply concatenating the DrQv2 encoder with the SHAC state-based architecture, ideally, no method is unfairly favored.
The hyperparameters are tuned following the guidelines of~\citet{mu2025state2vis_dagger}. 
We found that, to make State-to-Visual Distillation work effectively, the most critical factor is the data collection strategy and the frequency of network updates, consistent with the findings reported in~\citet{mu2025state2vis_dagger}.
Specifically, we collect data in a SHAC-style manner: instead of executing long, continuous trajectories, we roll out short-horizon segments that resume from the endpoint of the previous rollout. 
Notably, this implementation is the same as that used in~\citet{mu2025state2vis_dagger}.
The detail parameters are listed in Table~\ref{tab:State-to-visual DAgger training hyperparameters}
\begin{table}[h!]

\caption{State-to-visual Distillation training parameters}
\label{tab:State-to-visual DAgger training hyperparameters}
\begin{center}

\begin{tabular}{cc}
\toprule
 \textbf{Parameter name} &  \textbf{Value} \\ 
\midrule
\midrule
Short horizon length $h$ & $32$  \\
\hline
Number of parallel environments $N$ & $64$ \\
\hline
Learning rate & $0.002$ \\
\hline
Adam $(\beta_1,\beta_2)$ &  $(0.7,0.95)$ \\
\hline
Batch size & $128$ \\
\hline
Early stop threshold &  $0.1$ \\
\hline
Maximum reply buffer size & $10^5
$ \\
\bottomrule
\end{tabular}
\end{center}
\end{table}

\paragraph{SHAC with differentiable rendering}
The neural architecture is kept identical to ours and is detailed in Table~\ref{tab:drqv2 encoder} and Table~\ref{tab: our actor critic}.
For 3D tasks such as Ant and Humanoid, we observed that the gradient norm quickly diverges to infinity as the horizon length $h$ increases.
To address this, we use an even smaller horizon length compared to the one used for training state-based SHAC.
The number of parallel environments is also reduced due to GPU memory constraints.
The specific parameter values are provided in Table~\ref{tab:visual shac training hyperparameters}.

\begin{table}[h!]

\caption{Visual-SHAC training parameters}
\label{tab:visual shac training hyperparameters}
\begin{center}

\begin{tabular}{l|c|c|c|c}
\toprule
 \textbf{Parameter name} &  \textbf{Cartpole} & \textbf{Hopper} & \textbf{Ant} & \textbf{Humanoid} \\ 
\midrule
\midrule
Short horizon length $h$ &32 &32 &8 &32 \\
\hline
Number of parallel environments $N$ &64 &64 &32 &32 \\
\hline
Critic learning rate &\multicolumn{2}{c|}{$0.0002$} &$0.002$ &$0.0005$ \\
\hline
Target value network $\alpha$ &  \multicolumn{3}{c|}{$0.2$} & $0.995$ \\
\hline
Actor learning rate & \multicolumn{4}{c}{$0.002$} \\
\hline
Discount factor $\gamma$ &  \multicolumn{4}{c}{$0.99$} \\
\hline
Value estimation $\lambda$ &  \multicolumn{4}{c}{$0.95$} \\
\hline
Adam $(\beta_1,\beta_2)$ &  \multicolumn{4}{c}{$(0.7,0.95)$} \\
\hline
Number of critic training iterations &  \multicolumn{4}{c}{$16$} \\
\hline
Number of critic training minibatches &  \multicolumn{4}{c}{$4$} \\
\bottomrule
\end{tabular}
\end{center}
\end{table}
\section{Additional Experiment}\label{sec: additional experiment}
\subsection{Compare DPG to SHAC on state space}\label{sec: exp on state observation}
\begin{figure}[thb]
    \begin{center}
    \includegraphics[width=\textwidth]{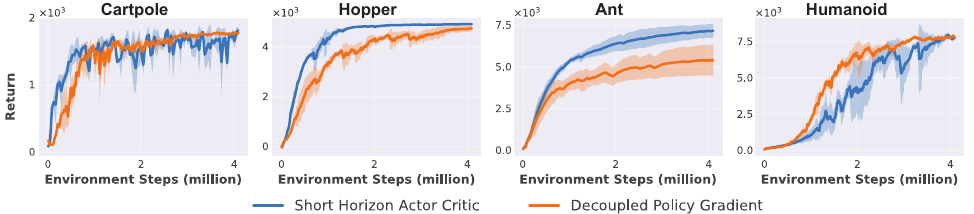}
    \end{center}
    \vspace{-10pt}
    \caption{\small{\textbf{Comparison between DPG and SHAC on state space:} Experiments are conducted with low-dimensional state observations and results are averaged over five random seeds. All hyperparameters are kept identical to those used in the original SHAC. 
    DPG still achieves comparable results in the settings that favor SHAC.}}
    \vspace{-5pt}
    \label{fig: full state comparision}
\end{figure}
We provide additional experiments to valid using decoupled policy gradient is enough to gain good performance in many scenarios.
Here, we conduct on state space, with a single line change on the original SHAC code: \texttt{actions = actor(obs)} to \texttt{actions = actor(obs.detach())}. 
We kept all hyperparameters identical to those reported in SHAC.
Figure~\ref{fig: full state comparision} compares the training performance of the SHAC with full analytical policy gradient~\ref{eq: policy gradient} to our decoupled policy gradient~\ref{eq: decoupled policy gradient}.
We find that the decoupled policy gradient achieves performance comparable to SHAC on low-dimensional state spaces, even under settings that favor SHAC.
\subsection{Ablation on Value Function}\label{sec: value function ablation}
In this section, we present an ablation study on the value function. As shown in Figure~\ref{fig: ablation on value function}, without the value function, our method fails to learn an effective policy. This result highlights the critical role of the value function in analytical policy gradient methods.
\begin{figure}[thb]
    \begin{center}
    \includegraphics[width=\textwidth]{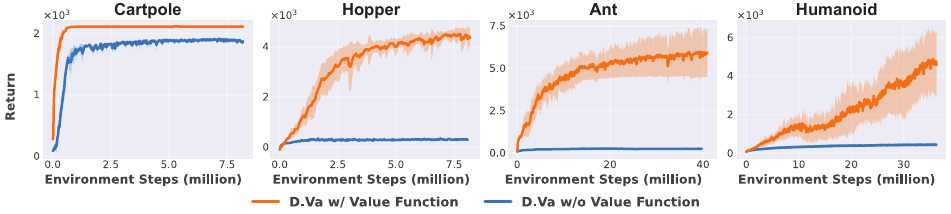}
    \end{center}
    \vspace{-10pt}
    \caption{Ablation study on value function. D.Va without value function fail to learn effective control for most of the tasks.}
    \vspace{-5pt}
    \label{fig: ablation on value function}
\end{figure}

\subsection{Ablation on number of frames}\label{sec: number of frames ablation}
In this section, we present an ablation study on the effect of the number of concatenated frames used as input to the policy.
Table~\ref{tab: ablation on number of frames} shows the final return achieved with different numbers of stacked frames as input to the policy.
We observe that as long as the number of frames is not extremely low (e.g., one), the method achieves comparable final performance.
\begin{table}[h!]
\centering
\caption{Final Return Achieved with Different Number of Stacked Frames}
\label{tab: ablation on number of frames}
\begin{tabular}{ccccc}
\toprule
Number of frames &Cartpole & Hopper &Ant & Humanoid \\
\midrule
1 & $2068 \pm 29.83$ & $4398.48 \pm 266.80$ & $5681 \pm 1344.52$ & $5596.94 \pm 937.92$ \\
2 & $2115 \pm 21.20$ & $5055.23 \pm 6.73$ & $7418 \pm 862.39$ & $5342.00 \pm 1680.70$ \\
3 & $2139 \pm 22.42$ & $5067.13 \pm 18.14$ & $7218 \pm 986.00$ & $7475.77 \pm 812.48$ \\
4 & $2155 \pm 25.49$ & $5055.03 \pm 2.08$ & $9680 \pm 222.27$ & $7498.79 \pm 623.11$ \\
5 & $2095 \pm 22.5$ & $5072.33 \pm 14.62$ & $8286 \pm 880.50$& $6345.61 \pm 37.76$ \\
6 & $2140 \pm 29.84$ & $5077.63 \pm 23.74$ & $8243 \pm 799.45$ & $6578.59 \pm 717.24$ \\
\bottomrule
\end{tabular}
\end{table}

\subsection{Additional tasks}

We include an additional quadruped walking task using ANYmal~\citep{hutter2016anymal}, with third-party side-view images as input. 
As shown in Figure~\ref{fig: anymal_loco}, our method learns a visual locomotion policy within 20 minutes on a single GPU.

\begin{figure}[thb]
    \begin{center}
    \includegraphics[width=\textwidth]{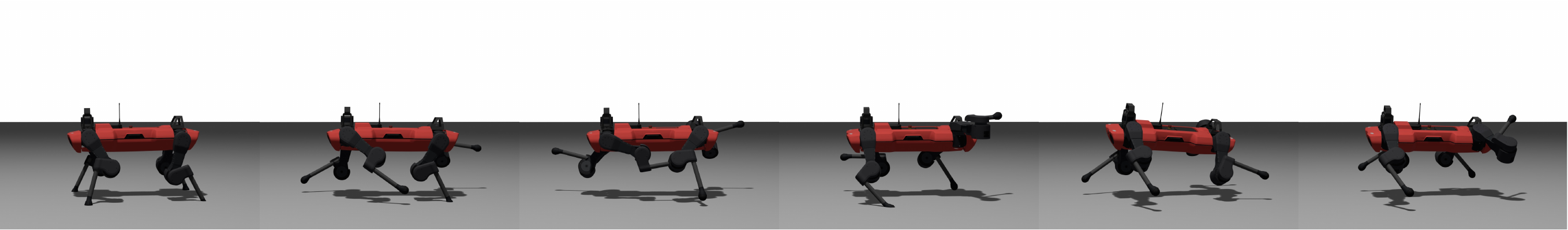}
    \end{center}
    \vspace{-10pt}
    \caption{ANYmal locomotion: Our method is able to learn ANYmal locomotion from purely third-party visual input within 20 minutes.}
    \vspace{-5pt}
    \label{fig: anymal_loco}
\end{figure}

\subsection{More gradient norm analysis}

Figure~\ref{fig: gradient norm for all envs} shows the gradient norms of the visual policy computed using SHAC and D.Va, respectively. We observe that the gradient norms for the two 3D tasks blow up, which explains why SHAC with differentiable rendering fails to learn an effective visual policy for these tasks, as shown in Figure~\ref{fig: compare with diff rendering}.

\begin{figure}[thb]
    \begin{center}
    \includegraphics[width=\textwidth]{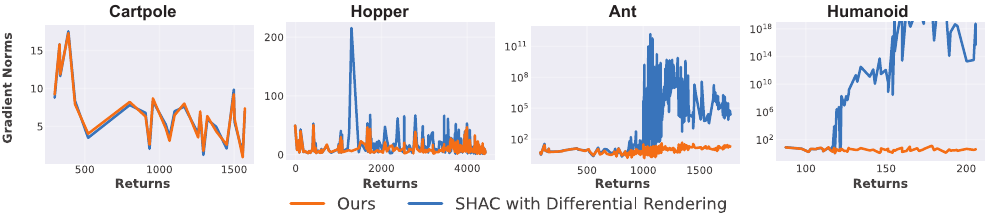}
    \end{center}
    \vspace{-10pt}
    \caption{Gradient norms computed using SHAC and D.Va are shown for all environments. The gradient norms for the Ant and Humanoid tasks blow up.}
    \vspace{-5pt}
    \label{fig: gradient norm for all envs}
\end{figure}
\section{Setup} \label{sec: setup}
\subsection{Tasks Descriptions}
We select four classical RL tasks across different complexity levels. 
The camera views are similar to those used in~\citet{yarats2021drqv2}, as illustrated in Figure~\ref{fig: all views}.
Except for Cartpole, where the camera is fixed to the world frame, all other cameras track the position of the robot's base joint.
The motion of Cartpole and Hopper is constrained to 2D, whereas Ant and Humanoid are free to move in 3D space.
The body of the Ant may block the view of some of its legs due to the side-view camera setup, making the environment partially observable.
In contrast, the joints in all other environments remain visible from the camera regardless of posture, resulting in fully observable settings.
\begin{figure}[h]
    \begin{center}
    \includegraphics[width=\textwidth]{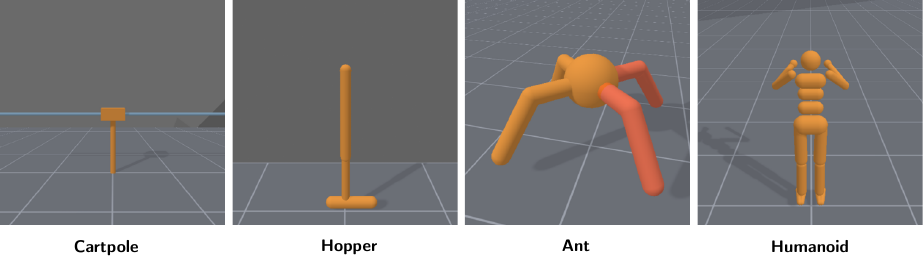}
    \end{center}
    \vspace{-10pt}
    \caption{\small{Camera views of each environment in ManiskillV3}}
    \vspace{-10pt}
    \label{fig: all views}
\end{figure}

The reward functions are identical to those used in the SHAC paper, except for the Cartpole system, where we add a health score to prevent the cart from moving off-screen.
The details are summarized as follows:
\paragraph{Cartpole:} The running rewards are defined as 
\begin{equation}
    R \colon= 10 - \theta^2 - 0.1\dot{\theta}^2 - 0.05x^2 - 0.1 \dot{x}^2, 
\end{equation}where $\theta$, $x$ denote the angle of pole from upright position and position of cart; and $\dot{\theta}$, $\dot{x}$ are the angular and linear velocity respectively.
The total trajectory length is 240, and early termination is triggered when Cartpole is outside the camera views, i.e.,$|x| \geq 2.5$. 
\paragraph{Hopper:}
The running rewards are defined as 
\begin{equation}
    R \colon= v_x - (\frac{\theta}{30^{\circ}})^2 + R_\text{height} - 0.1 \|\mathbf{a}\|,
\end{equation}
where $v_x$ is the forward velocity, $\theta$ is the orientation of base joint and 
\begin{equation}
    R_\text{height} = \begin{cases}
    -200\Delta_h^2,\, \Delta_h \leq 0\\
    \Delta_h, \, \Delta > 0
    \end{cases}; \, \, \Delta_h = \text{clip}(h+0.3, -1, 0.3),
\end{equation}
is designed to penalize the low height state.
The total trajectory length is 1000, and early termination is triggered when the height of the hopper is lower than -0.45m.
\paragraph{Ant:}
The running rewards are defined as
\begin{equation}
    R \colon= v_x + 0.1R_\text{up} + R_\text{heading} + p_z - 0.27,
\end{equation}
where $v_x$ is forward velocity, $R_\text{up}$, $R_\text{heading}$ is the projection of base orientation in upright and forward direction, encouraging the agent to be vertically stable and run straightforward, $p_z$ is the height of the base. 
The total trajectory length is 1000, and early termination is triggered when the height of the ant is lower than 0.27m.
\paragraph{Humanoid:}
The running rewards are defined as 
\begin{equation}
    R \colon= v_x + 0.1R_\text{up} + R_\text{heading} + R_\text{height}-0.002 \|\mathbf{a}\|,
\end{equation}
where $v_x$ is forward velocity, $R_\text{up}$, $R_\text{heading}$ is the projection of base orientation in the upright and forward direction, and 
\begin{equation}
    R_\text{height} = \begin{cases}
    -200\Delta_h^2,\, \Delta_h \leq 0\\
    \Delta_h, \, \Delta > 0
    \end{cases}; \, \, \Delta_h = \text{clip}(h-0.84, -1, 0.1).
\end{equation}
The total trajectory length is 1000, and early termination is triggered when the height of the torso is lower than 0.74m.
\subsection{Simulation} 
We use the same differentiable simulation framework proposed in SHAC as the underlying dynamics model. 
 For the three benchmark RL methods and the state-to-visual tasks, we employ ManiSkill-V3~\citep{taomaniskill3} for rendering. 
 In contrast, for visual-SHAC, we implement a custom differentiable rendering pipeline using PyTorch3D, as detailed in Appendix~\ref{sec: diff render}. 
 All software components are GPU-accelerated and parallelized. We evaluate our method under both rendering pipelines. 
 To ensure a fair comparison, all experiments presented in Section~\ref{sec: experiment} are conducted using the same rendering setup across different methods.
We find the ManiSkill implementation to be fairly efficient—approximately 3$\times$ faster than our differentiable rendering pipeline—and therefore, we use differentiable rendering only when necessary.
However, aside from the difference in forward rendering speed, we find that the final return and sample efficiency of our method remain similar across both rendering pipelines.
\subsection{Hardware}
All experiments are conducted on a single NVIDIA GeForce RTX 4080 GPU (16GB) with an Intel Xeon W5-2445 CPU and 256GB RAM.
Unlike the case of simulating dynamics alone—where tens of thousands of environments can be parallelized at once—heterogeneous rendering requires significantly more memory. 
As a result, our hyperparameter tuning is carefully constrained to stay within the available memory budget.

\newpage
\section*{NeurIPS Paper Checklist}

 

\begin{enumerate}

\item {\bf Claims}
    \item[] Question: Do the main claims made in the abstract and introduction accurately reflect the paper's contributions and scope?
    \item[] Answer: \answerYes{} 
    \item[] Justification: 
    We summarize our contribution into three parts:
    \begin{itemize}
        \item 
        \emph{Claim:} We provide a novel computationally efficient method for training visual policy.

        \emph{Support:}
        The detailed implementation of our algorithm is provided in Section~\ref{sec: method} and Appendix~\ref{sec: dva implementation}.
        The experimental validation is provided in Section~\ref{sec: experiment}.
        
        \item \emph{Claim:} We benchmark common visual-policy learning methods under GPU-accelerated simulation.

        \emph{Support:}
        The experiments are provided in Section~\ref{sec: experiment}.
        Additionally, we provide a computational bottleneck analysis for selected methods.

        \item
        \emph{Claim:}
        We provide an analysis of analytical policy gradients and show deep connection between open-loop trajectory optimization with closed-loop trajectory optimization 
        
        \emph{Support:}
        The analysis is provided in Section~\ref{sec: method}, with both conceptual explanations and numerical evidence.
                
    \end{itemize}
    \item[] Guidelines:
    \begin{itemize}
        \item The answer NA means that the abstract and introduction do not include the claims made in the paper.
        \item The abstract and/or introduction should clearly state the claims made, including the contributions made in the paper and important assumptions and limitations. A No or NA answer to this question will not be perceived well by the reviewers. 
        \item The claims made should match theoretical and experimental results, and reflect how much the results can be expected to generalize to other settings. 
        \item It is fine to include aspirational goals as motivation as long as it is clear that these goals are not attained by the paper. 
    \end{itemize}

\item {\bf Limitations}
    \item[] Question: Does the paper discuss the limitations of the work performed by the authors?
    \item[] Answer: \answerYes{} 
    \item[] Justification: 
    The main limitation of our method lies in its reliance on the quality of the simulation environment, as discussed in Section~\ref{sec: conclusion}. 
    However, we argue this limitation is shared by most existing methods in the field.
    As result, our next goal is to transfer the success to real-world scenario.

    The computational efficient of our method across different complexity is demonstrate in Section~\ref{sec: experiment}.
    The scaling of memory usage is also shown in Figure~\ref{fig: memory consumption}.
    \item[] Guidelines:
    \begin{itemize}
        \item The answer NA means that the paper has no limitation while the answer No means that the paper has limitations, but those are not discussed in the paper. 
        \item The authors are encouraged to create a separate "Limitations" section in their paper.
        \item The paper should point out any strong assumptions and how robust the results are to violations of these assumptions (e.g., independence assumptions, noiseless settings, model well-specification, asymptotic approximations only holding locally). The authors should reflect on how these assumptions might be violated in practice and what the implications would be.
        \item The authors should reflect on the scope of the claims made, e.g., if the approach was only tested on a few datasets or with a few runs. In general, empirical results often depend on implicit assumptions, which should be articulated.
        \item The authors should reflect on the factors that influence the performance of the approach. For example, a facial recognition algorithm may perform poorly when image resolution is low or images are taken in low lighting. Or a speech-to-text system might not be used reliably to provide closed captions for online lectures because it fails to handle technical jargon.
        \item The authors should discuss the computational efficiency of the proposed algorithms and how they scale with dataset size.
        \item If applicable, the authors should discuss possible limitations of their approach to address problems of privacy and fairness.
        \item While the authors might fear that complete honesty about limitations might be used by reviewers as grounds for rejection, a worse outcome might be that reviewers discover limitations that aren't acknowledged in the paper. The authors should use their best judgment and recognize that individual actions in favor of transparency play an important role in developing norms that preserve the integrity of the community. Reviewers will be specifically instructed to not penalize honesty concerning limitations.
    \end{itemize}

\item {\bf Theory assumptions and proofs}
    \item[] Question: For each theoretical result, does the paper provide the full set of assumptions and a complete (and correct) proof?
    \item[] Answer: \answerYes{} 
    \item[] Justification: We provide a formal proof of Theorem~\ref{theorem: policy distillation} in Section~\ref{sec: method}. Theorem~\ref{theorem: policy distillation} follows directly from a straightforward application of the chain rule and does not require any assumptions beyond differentiability.
    \item[] Guidelines:
    \begin{itemize}
        \item The answer NA means that the paper does not include theoretical results. 
        \item All the theorems, formulas, and proofs in the paper should be numbered and cross-referenced.
        \item All assumptions should be clearly stated or referenced in the statement of any theorems.
        \item The proofs can either appear in the main paper or the supplemental material, but if they appear in the supplemental material, the authors are encouraged to provide a short proof sketch to provide intuition. 
        \item Inversely, any informal proof provided in the core of the paper should be complemented by formal proofs provided in appendix or supplemental material.
        \item Theorems and Lemmas that the proof relies upon should be properly referenced. 
    \end{itemize}

    \item {\bf Experimental result reproducibility}
    \item[] Question: Does the paper fully disclose all the information needed to reproduce the main experimental results of the paper to the extent that it affects the main claims and/or conclusions of the paper (regardless of whether the code and data are provided or not)?
    \item[] Answer: \answerYes{}
    \item[] Justification: All additional information to reproduce the results including the general setting and hyperparameters are includes in Appendix~\ref{sec: implementation details}, Appendix~\ref{sec: hyperparameters} and Appendix~\ref{sec: setup}.
    \item[] Guidelines:
    \begin{itemize}
        \item The answer NA means that the paper does not include experiments.
        \item If the paper includes experiments, a No answer to this question will not be perceived well by the reviewers: Making the paper reproducible is important, regardless of whether the code and data are provided or not.
        \item If the contribution is a dataset and/or model, the authors should describe the steps taken to make their results reproducible or verifiable. 
        \item Depending on the contribution, reproducibility can be accomplished in various ways. For example, if the contribution is a novel architecture, describing the architecture fully might suffice, or if the contribution is a specific model and empirical evaluation, it may be necessary to either make it possible for others to replicate the model with the same dataset, or provide access to the model. In general. releasing code and data is often one good way to accomplish this, but reproducibility can also be provided via detailed instructions for how to replicate the results, access to a hosted model (e.g., in the case of a large language model), releasing of a model checkpoint, or other means that are appropriate to the research performed.
        \item While NeurIPS does not require releasing code, the conference does require all submissions to provide some reasonable avenue for reproducibility, which may depend on the nature of the contribution. For example
        \begin{enumerate}
            \item If the contribution is primarily a new algorithm, the paper should make it clear how to reproduce that algorithm.
            \item If the contribution is primarily a new model architecture, the paper should describe the architecture clearly and fully.
            \item If the contribution is a new model (e.g., a large language model), then there should either be a way to access this model for reproducing the results or a way to reproduce the model (e.g., with an open-source dataset or instructions for how to construct the dataset).
            \item We recognize that reproducibility may be tricky in some cases, in which case authors are welcome to describe the particular way they provide for reproducibility. In the case of closed-source models, it may be that access to the model is limited in some way (e.g., to registered users), but it should be possible for other researchers to have some path to reproducing or verifying the results.
        \end{enumerate}
    \end{itemize}

\item {\bf Open access to data and code}
    \item[] Question: Does the paper provide open access to the data and code, with sufficient instructions to faithfully reproduce the main experimental results, as described in supplemental material?
    \item[] Answer: \answerYes{} 
    \item[] Justification: the code is available at \url{https://github.com/HaoxiangYou/D.VA}
    \item[] Guidelines:
    \begin{itemize}
        \item The answer NA means that paper does not include experiments requiring code.
        \item Please see the NeurIPS code and data submission guidelines (\url{https://nips.cc/public/guides/CodeSubmissionPolicy}) for more details.
        \item While we encourage the release of code and data, we understand that this might not be possible, so “No” is an acceptable answer. Papers cannot be rejected simply for not including code, unless this is central to the contribution (e.g., for a new open-source benchmark).
        \item The instructions should contain the exact command and environment needed to run to reproduce the results. See the NeurIPS code and data submission guidelines (\url{https://nips.cc/public/guides/CodeSubmissionPolicy}) for more details.
        \item The authors should provide instructions on data access and preparation, including how to access the raw data, preprocessed data, intermediate data, and generated data, etc.
        \item The authors should provide scripts to reproduce all experimental results for the new proposed method and baselines. If only a subset of experiments are reproducible, they should state which ones are omitted from the script and why.
        \item At submission time, to preserve anonymity, the authors should release anonymized versions (if applicable).
        \item Providing as much information as possible in supplemental material (appended to the paper) is recommended, but including URLs to data and code is permitted.
    \end{itemize}

\item {\bf Experimental setting/details}
    \item[] Question: Does the paper specify all the training and test details (e.g., data splits, hyperparameters, how they were chosen, type of optimizer, etc.) necessary to understand the results?
    \item[] Answer: \answerYes{} 
    \item[] Justification: Please refer to Appendix~\ref{sec: hyperparameters} and Appendix~\ref{sec: setup}.
    \item[] Guidelines:
    \begin{itemize}
        \item The answer NA means that the paper does not include experiments.
        \item The experimental setting should be presented in the core of the paper to a level of detail that is necessary to appreciate the results and make sense of them.
        \item The full details can be provided either with the code, in appendix, or as supplemental material.
    \end{itemize}

\item {\bf Experiment statistical significance}
    \item[] Question: Does the paper report error bars suitably and correctly defined or other appropriate information about the statistical significance of the experiments?
    \item[] Answer: \answerYes{} 
    \item[] Justification: For each task and algorithm, we run experiments with five different random seeds and report both the mean and standard deviation in Section~\ref{sec: experiment}, following common practice in the reinforcement learning literature.
    \item[] Guidelines:
    \begin{itemize}
        \item The answer NA means that the paper does not include experiments.
        \item The authors should answer "Yes" if the results are accompanied by error bars, confidence intervals, or statistical significance tests, at least for the experiments that support the main claims of the paper.
        \item The factors of variability that the error bars are capturing should be clearly stated (for example, train/test split, initialization, random drawing of some parameter, or overall run with given experimental conditions).
        \item The method for calculating the error bars should be explained (closed form formula, call to a library function, bootstrap, etc.)
        \item The assumptions made should be given (e.g., Normally distributed errors).
        \item It should be clear whether the error bar is the standard deviation or the standard error of the mean.
        \item It is OK to report 1-sigma error bars, but one should state it. The authors should preferably report a 2-sigma error bar than state that they have a 96\% CI, if the hypothesis of Normality of errors is not verified.
        \item For asymmetric distributions, the authors should be careful not to show in tables or figures symmetric error bars that would yield results that are out of range (e.g. negative error rates).
        \item If error bars are reported in tables or plots, The authors should explain in the text how they were calculated and reference the corresponding figures or tables in the text.
    \end{itemize}

\item {\bf Experiments compute resources}
    \item[] Question: For each experiment, does the paper provide sufficient information on the computer resources (type of compute workers, memory, time of execution) needed to reproduce the experiments?
    \item[] Answer: \answerYes{} 
    \item[] Justification: Please refer to Appendix~\ref{sec: setup}.
    \item[] Guidelines:
    \begin{itemize}
        \item The answer NA means that the paper does not include experiments.
        \item The paper should indicate the type of compute workers CPU or GPU, internal cluster, or cloud provider, including relevant memory and storage.
        \item The paper should provide the amount of compute required for each of the individual experimental runs as well as estimate the total compute. 
        \item The paper should disclose whether the full research project required more compute than the experiments reported in the paper (e.g., preliminary or failed experiments that didn't make it into the paper). 
    \end{itemize}
    
\item {\bf Code of ethics}
    \item[] Question: Does the research conducted in the paper conform, in every respect, with the NeurIPS Code of Ethics \url{https://neurips.cc/public/EthicsGuidelines}?
    \item[] Answer: \answerYes{} 
    \item[] Justification: We have carefully reviewed the NeurIPS Code of Ethics. Our work adheres fully to these guidelines and does not involve any specific social concerns or ethical risks.
    \item[] Guidelines:
    \begin{itemize}
        \item The answer NA means that the authors have not reviewed the NeurIPS Code of Ethics.
        \item If the authors answer No, they should explain the special circumstances that require a deviation from the Code of Ethics.
        \item The authors should make sure to preserve anonymity (e.g., if there is a special consideration due to laws or regulations in their jurisdiction).
    \end{itemize}

\item {\bf Broader impacts}
    \item[] Question: Does the paper discuss both potential positive societal impacts and negative societal impacts of the work performed?
    \item[] Answer: \answerNA{} 
    \item[] Justification: We do not explicitly discuss societal impacts in the paper. However, our work may contribute to reducing the carbon footprint by enabling more efficient algorithms. We do not foresee any potential negative societal impacts arising from this research, as it is focused on foundational algorithmic improvements without direct application to sensitive or high-risk domains.
    \item[] Guidelines:
    \begin{itemize}
        \item The answer NA means that there is no societal impact of the work performed.
        \item If the authors answer NA or No, they should explain why their work has no societal impact or why the paper does not address societal impact.
        \item Examples of negative societal impacts include potential malicious or unintended uses (e.g., disinformation, generating fake profiles, surveillance), fairness considerations (e.g., deployment of technologies that could make decisions that unfairly impact specific groups), privacy considerations, and security considerations.
        \item The conference expects that many papers will be foundational research and not tied to particular applications, let alone deployments. However, if there is a direct path to any negative applications, the authors should point it out. For example, it is legitimate to point out that an improvement in the quality of generative models could be used to generate deepfakes for disinformation. On the other hand, it is not needed to point out that a generic algorithm for optimizing neural networks could enable people to train models that generate Deepfakes faster.
        \item The authors should consider possible harms that could arise when the technology is being used as intended and functioning correctly, harms that could arise when the technology is being used as intended but gives incorrect results, and harms following from (intentional or unintentional) misuse of the technology.
        \item If there are negative societal impacts, the authors could also discuss possible mitigation strategies (e.g., gated release of models, providing defenses in addition to attacks, mechanisms for monitoring misuse, mechanisms to monitor how a system learns from feedback over time, improving the efficiency and accessibility of ML).
    \end{itemize}
    
\item {\bf Safeguards}
    \item[] Question: Does the paper describe safeguards that have been put in place for responsible release of data or models that have a high risk for misuse (e.g., pretrained language models, image generators, or scraped datasets)?
    \item[] Answer: \answerNA{} 
    \item[] Justification: This work does not involve any models or datasets that pose a high risk of misuse. Therefore, no additional safeguards are necessary.
    \item[] Guidelines:
    \begin{itemize}
        \item The answer NA means that the paper poses no such risks.
        \item Released models that have a high risk for misuse or dual-use should be released with necessary safeguards to allow for controlled use of the model, for example by requiring that users adhere to usage guidelines or restrictions to access the model or implementing safety filters. 
        \item Datasets that have been scraped from the Internet could pose safety risks. The authors should describe how they avoided releasing unsafe images.
        \item We recognize that providing effective safeguards is challenging, and many papers do not require this, but we encourage authors to take this into account and make a best faith effort.
    \end{itemize}

\item {\bf Licenses for existing assets}
    \item[] Question: Are the creators or original owners of assets (e.g., code, data, models), used in the paper, properly credited and are the license and terms of use explicitly mentioned and properly respected?
    \item[] Answer: \answerYes{} 
    \item[] Justification: Our implementation is entirely based on open-source software and code.
    We properly cite all tools and packages used in our work.
    \item[] Guidelines:
    \begin{itemize}
        \item The answer NA means that the paper does not use existing assets.
        \item The authors should cite the original paper that produced the code package or dataset.
        \item The authors should state which version of the asset is used and, if possible, include a URL.
        \item The name of the license (e.g., CC-BY 4.0) should be included for each asset.
        \item For scraped data from a particular source (e.g., website), the copyright and terms of service of that source should be provided.
        \item If assets are released, the license, copyright information, and terms of use in the package should be provided. For popular datasets, \url{paperswithcode.com/datasets} has curated licenses for some datasets. Their licensing guide can help determine the license of a dataset.
        \item For existing datasets that are re-packaged, both the original license and the license of the derived asset (if it has changed) should be provided.
        \item If this information is not available online, the authors are encouraged to reach out to the asset's creators.
    \end{itemize}

\item {\bf New assets}
    \item[] Question: Are new assets introduced in the paper well documented and is the documentation provided alongside the assets?
    \item[] Answer: \answerNA{} 
    \item[] Justification: No new assets are released as part of this work at the time of submission.
    \item[] Guidelines:
    \begin{itemize}
        \item The answer NA means that the paper does not release new assets.
        \item Researchers should communicate the details of the dataset/code/model as part of their submissions via structured templates. This includes details about training, license, limitations, etc. 
        \item The paper should discuss whether and how consent was obtained from people whose asset is used.
        \item At submission time, remember to anonymize your assets (if applicable). You can either create an anonymized URL or include an anonymized zip file.
    \end{itemize}

\item {\bf Crowdsourcing and research with human subjects}
    \item[] Question: For crowdsourcing experiments and research with human subjects, does the paper include the full text of instructions given to participants and screenshots, if applicable, as well as details about compensation (if any)? 
    \item[] Answer: \answerNA{} 
    \item[] Justification: this paper does not involve crowdsourcing nor research with human subjects.
    \item[] Guidelines:
    \begin{itemize}
        \item The answer NA means that the paper does not involve crowdsourcing nor research with human subjects.
        \item Including this information in the supplemental material is fine, but if the main contribution of the paper involves human subjects, then as much detail as possible should be included in the main paper. 
        \item According to the NeurIPS Code of Ethics, workers involved in data collection, curation, or other labor should be paid at least the minimum wage in the country of the data collector. 
    \end{itemize}

\item {\bf Institutional review board (IRB) approvals or equivalent for research with human subjects}
    \item[] Question: Does the paper describe potential risks incurred by study participants, whether such risks were disclosed to the subjects, and whether Institutional Review Board (IRB) approvals (or an equivalent approval/review based on the requirements of your country or institution) were obtained?
    \item[] Answer: \answerNA{} 
    \item[] Justification: this paper does not involve crowdsourcing nor research with human subjects.
    \item[] Guidelines:
    \begin{itemize}
        \item The answer NA means that the paper does not involve crowdsourcing nor research with human subjects.
        \item Depending on the country in which research is conducted, IRB approval (or equivalent) may be required for any human subjects research. If you obtained IRB approval, you should clearly state this in the paper. 
        \item We recognize that the procedures for this may vary significantly between institutions and locations, and we expect authors to adhere to the NeurIPS Code of Ethics and the guidelines for their institution. 
        \item For initial submissions, do not include any information that would break anonymity (if applicable), such as the institution conducting the review.
    \end{itemize}

\item {\bf Declaration of LLM usage}
    \item[] Question: Does the paper describe the usage of LLMs if it is an important, original, or non-standard component of the core methods in this research? Note that if the LLM is used only for writing, editing, or formatting purposes and does not impact the core methodology, scientific rigorousness, or originality of the research, declaration is not required.
    \item[] Answer: \answerNA{} 
    \item[] Justification: the core method development in this research does not involve LLMs as any important, original, or non-standard components.
    \item[] Guidelines:
    \begin{itemize}
        \item The answer NA means that the core method development in this research does not involve LLMs as any important, original, or non-standard components.
        \item Please refer to our LLM policy (\url{https://neurips.cc/Conferences/2025/LLM}) for what should or should not be described.
    \end{itemize}

\end{enumerate}

\end{document}